\documentclass[10pt,journal]{IEEEtran}
\usepackage[margin=1in]{geometry}
\usepackage{amsmath,amssymb,amsthm,mathtools,bm}
\usepackage{hyperref,cleveref,booktabs,array,graphicx}
\usepackage{algorithm,algpseudocode}
\usepackage{caption}
\usepackage{url}

\usepackage{xcolor}

\newtheorem{theorem}{Theorem}

\title{Should I Use This Synthetic Dataset for Training? How to Test with Minimal Real Data}

\author{Zhenyu~Tao,~\IEEEmembership{Graduate Student Member,~IEEE},
  Wei~Xu,~\IEEEmembership{Fellow,~IEEE},
  Xiaohu~You,~\IEEEmembership{Fellow,~IEEE},
  Petar~Popovski,~\IEEEmembership{Fellow,~IEEE},
  Osvaldo~Simeone,~\IEEEmembership{Fellow,~IEEE}
  \thanks{The work of O. Simeone was supported by an Open Fellowship of the EPSRC (EP/W024101/1), by the EPSRC (EP/X011852/1), and by the ERC (No. 101198347). The work of P. Popovski was supported, in part, by the Velux Foundation, Denmark, through the Villum Investigator Grant WATER, nr. 37793.}
  \thanks{Z. Tao, W. Xu, and X. You are with the National Mobile Communications Research Lab, Southeast University, Nanjing 210096, China, and also with the Pervasive Communication Research Center, Purple Mountain Laboratories, Nanjing 211111, China (email: \{zhenyu\_tao, wxu, xhyu\}@seu.edu.cn).}
  \thanks{P. Popovski is with the Department of Electronic Systems, Aalborg University, 9220 Aalborg, Denmark (e-mail: petarp@es.aau.dk).}
  \thanks{O. Simeone is with the Institute for Intelligent Networked Systems, Northeastern University London, E1 8PH London, U.K., and also with the Connectivity Section, Department of Electronic Systems, Aalborg University, 9220 Aalborg, Denmark (e-mail: o.simeone@northeastern.edu).}

}
\date{}

\begin{document}

\maketitle
\begin{abstract}
Digital twins (DTs) and learned world models are
increasingly used to generate synthetic data that augment the scarce
real datasets available for training artificial intelligence (AI) models
in engineering systems. Owing to the inevitable
simulation-to-reality (sim-to-real) gap, however, augmentation may fail
to improve the performance of the trained model on
the real data distribution. This paper addresses the resulting decision
problem: Given a real dataset, a candidate synthetic dataset, and a fixed
learning algorithm, decide whether training on the augmented dataset
improves the true, population-level performance, while consuming as few
real test data points as possible. Two formulations are considered: a
direct test on the mean loss difference between the two trained
models, and a symmetry-based test on the paired loss difference,
which trades a stronger null assumption for faster evidence
accumulation. For the latter, we introduce the {adaptive e-process
sign-flip test} (aeSFT), a doubly adaptive procedure that adapts both the
number of Monte Carlo sign-flip rounds, and hence the computational cost,
and the amount of real test data consumed. aeSFT yields
anytime-valid Type-I error control, with no need to
pre-specify the test-set size. Experiments on a synthetic-data
classification task, a DT-aided wireless packet-scheduling task, and a
radio-map prediction task
show that aeSFT identifies useful synthetic data using substantially
fewer real test samples than mean-based sequential testing, matches the power of
fixed-sample sign-flip testing and the paired $t$-test, while keeping the
false-positive rate below the target level.
\end{abstract}

\begin{IEEEkeywords}
Synthetic data, digital twins, AI training, hypothesis testing, e-process, Monte Carlo test.
\end{IEEEkeywords}

\section{Introduction}

\subsection{Motivation and Context}
\label{subsec:motivation}

The dominant constraint on the deployment of artificial intelligence (AI)
in engineering systems is rarely model capacity. Rather, it is the availability
of data that reflect the specific system to be controlled. For example, wireless
network performance depends on the propagation environment, topology,
and traffic pattern of a particular deployment. Therefore, a learned control policy is only
as good as the operating conditions it has observed. However, collecting such
deployment-specific measurements is slow, expensive, and sometimes
disruptive to a live system \cite{ruah2026bridge}.

A now-standard response to this constraint is to manufacture data. In
engineering domains, \emph{digital twins} (DTs), i.e., high-fidelity,
site-specific simulators or emulators of the real system, can generate
essentially unlimited synthetic data tailored to the deployment of
interest~\cite{10628026}. Examples include differentiable ray-tracing engines for the
radio propagation environment \cite{hoydis2024learning,hoydis2022sionna},
network emulators built on software-defined radios
\cite{polese2024colosseum}, and DT-driven reinforcement learning
(RL) for network control \cite{11299846}. In parallel, \emph{world
models}, i.e., learned generative models of environment dynamics that
serve as differentiable, queryable surrogates of reality
\cite{ha2018world,hafner2023dreamer,bruce2024genie}, are emerging as key components of intelligent systems. Using DTs or world models, policies can potentially be trained largely, or entirely, using synthetic data~\cite{Hafner2025dreamer}.

\begin{figure*}[t]
\centering
\includegraphics[width=0.95\textwidth]{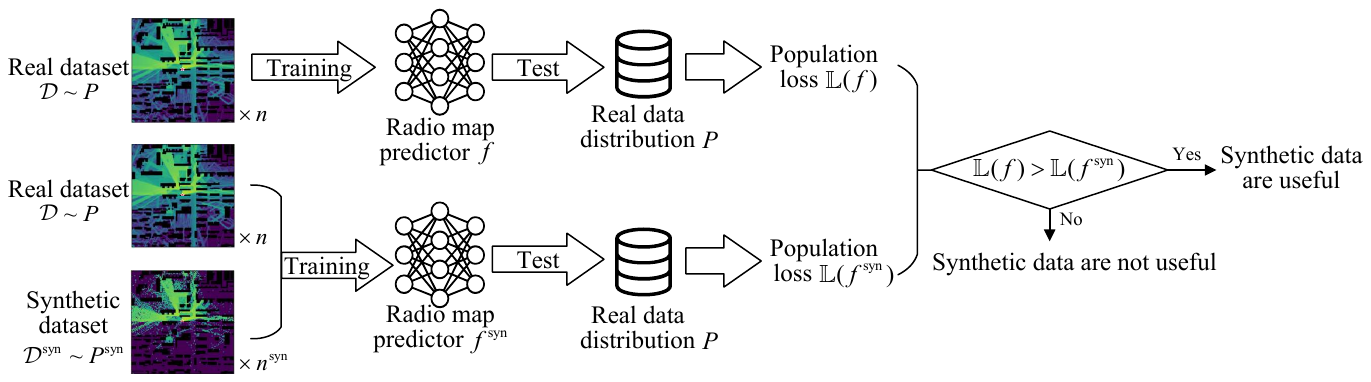}
\caption{\small Illustration of the problem setting: Given a real dataset $\mathcal{D}$ and a synthetic dataset $\mathcal{D}^{\mathrm{syn}}$, we wish to decide whether training on the augmented dataset $\mathcal{D}\cup\mathcal{D}^{\mathrm{syn}}$ reduces the population loss of the trained model.}
\label{fig:problem}
\end{figure*}

However, such models are simplifications of the systems they replicate. For example, in a ray tracer, geometry is approximated, and material parameters are estimated~\cite{Hoydis2023SionnaRT}. The resulting
\emph{simulation-to-reality} (sim-to-real) gap implies that the synthetic
distribution differs from the real one, and hence training on the union of real and synthetic datasets generally optimizes a criterion that is a biased estimate of the target population loss
\cite{ruah2026bridge,tobin2017domain}.

In general, whether a particular synthetic dataset, produced by a given DT or world model, helps a particular learning algorithm on a given task cannot be ascertained based only on the simulator's output. Rather, it requires evaluation using real data. In fact, since the quantity of interest is the population loss under the true data distribution, the only reliable
arbiter is a test dataset drawn from that distribution, but such test data
are expensive, corresponding to episodes on a physical robot \cite{mahmood2018benchmarking}, to queries answered by
human labellers \cite{chiang2024chatbot}, or to runs of a high-fidelity emulator or live network \cite{polese2024colosseum}. Consequently, in practice, it is critical to limit the number of real evaluations needed to establish the usefulness of synthetic data with a statistical guarantee.

\subsection{Related Work}
\label{subsec:related}

\emph{Synthetic data from DTs and world models.} A first line
of work seeks to reduce the sim-to-real gap at its source, by calibrating
the simulator against real measurements, e.g., via differentiable ray
tracing \cite{hoydis2024learning} or phase-error-aware calibration of the
scene geometry \cite{ruah2024calibrating}. A second line accepts a
residual gap and makes training robust to it, either by modelling
uncertainty about the environment in a Bayesian fashion
\cite{ruah2023bayesian}, or by correcting the training objective itself
using a small amount of real data through prediction-powered inference
(PPI) \cite{angelopoulos2023ppi} and its cross-validated
\cite{sifaou2025semi} and context-aware \cite{ruah2025context} variants;
see \cite{ruah2026bridge} for a review. These methods are complementary
to the present work: they aim to make synthetic data more useful, whereas
we ask whether a given synthetic dataset is useful at all, and provide a
statistical certificate for the answer. In particular, none of them
returns a decision on the usefulness of synthetic data with a controlled error probability (with respect to the distribution of the data used for testing the quality of synthetic data).

\emph{Data valuation and selection.} Assessing the contribution of
training data to model performance is the subject of a large literature,
including influence functions \cite{koh2017influence} and Shapley-value
data valuation \cite{ghorbani2019datashapley}. These techniques are
computational and attributional in nature: they estimate a contribution,
typically on held-out data treated as exact, without controlling the
error of the resulting decision. The problem addressed here is
complementary and explicitly statistical: it concerns the number of real
evaluations needed for a decision that is valid under the true
distribution.

\emph{Sequential testing by betting.} Our tools come from the recent
literature on e-values and game-theoretic statistics. In the
testing-by-betting framework \cite{shafer2021testing,ramdas2025evalues},
evidence against a null is represented as the wealth of a gambler playing
a game that is fair under the null. Such methods support optimal continuation and stopping while retaining statistical validity in terms of Type-I error. Betting strategies for means of bounded
random variables were developed in \cite{waudbysmith2024betting}, and
adapted to sequential hyperparameter and risk-control problems in
adaptive learn-then-test \cite{zecchin2025altt}, building on the
fixed-sample learn-then-test framework \cite{angelopoulos2025ltt}. This methodology can be applied to the problem of testing the difference between the population losses of a model trained with both real and synthetic data and of one trained just with real data. This approach yields an \emph{adaptive mean
test} (aMT), which is anytime-valid but can be
slow when the paired loss differences are heavy-tailed.

\emph{Randomization and Monte Carlo tests.} Instead of testing the mean loss difference, reference~\cite{sohm2026improving} recently proposed to test a more specialized hypothesis whereby unhelpful synthetic data implies a symmetric distribution of the loss difference. This hypothesis admits a randomization test~\cite{ritzwoller2025randomization}, which requires the number of random evaluations to be fixed in advance. This limitation was removed in
\cite{fischer2025smc}, which recasts randomized testing itself as a
betting game, and thereby adapts the number of random evaluations to the
difficulty of the instance. Importantly, however, adaptivity there concerns
\emph{computation}, not \emph{data}: the test set is fixed in advance,
which leaves untouched the bottleneck that matters most in our setting.

\subsection{Main Contributions}
\label{subsec:contributions}

This paper studies how to decide, with as few real test data points as
possible, whether augmenting a real training set with a given synthetic
dataset improves real-world performance. The main contributions are as
follows.

\begin{itemize}
\item \textbf{Formulation.} We formalize synthetic data usefulness as a
sequential binary testing problem under two alternative nulls: a
\emph{direct} null, stating that the mean loss difference between the
model trained on real data and the model trained on real plus synthetic
data is non-positive; and a \emph{symmetry-based} null, stating that the
paired sample-level loss difference is distributed symmetrically about
zero. We make explicit the assumptions under which each formulation is
appropriate, and the different guarantees that a rejection provides.

% \item \textbf{Background.} We review the two state-of-the-art procedures addressing the two formulations: aMT based on
% betting on the normalized loss difference \cite{waudbysmith2024betting,
% zecchin2025altt}, and the fixed-sample \cite{sohm2026improving} and
% efficient \cite{fischer2025smc} sign-flip tests. We also identify their
% respective bottlenecks: slow evidence accumulation under heavy-tailed
% differences for the former, and a test-set size that must be fixed in
% advance for the latter.

\item \textbf{Adaptive e-process
sign-flip test.} We introduce the \emph{adaptive e-process
sign-flip test} (aeSFT), which is adaptive along both data and complexity axes. Real test data are processed in independent batches; within a
batch, an \emph{intra-batch} e-process accumulates evidence over an
adaptive number of sign-flip rounds and may close the batch early when it
looks unpromising; across batches, an \emph{inter-batch} e-process
compounds the wealth inherited from previous batches, stopping as soon as enough evidence is collected. In the betting
interpretation, as illustrated in Fig.~\ref{fig:sft-esft-aesft-illustration}, the gambler is granted access to a sequence of casinos,
each with a fresh pool of real test data, and carries the accumulated
wealth from one to the next. 

\begin{figure*}[t]
\centering
\includegraphics[width=\textwidth]{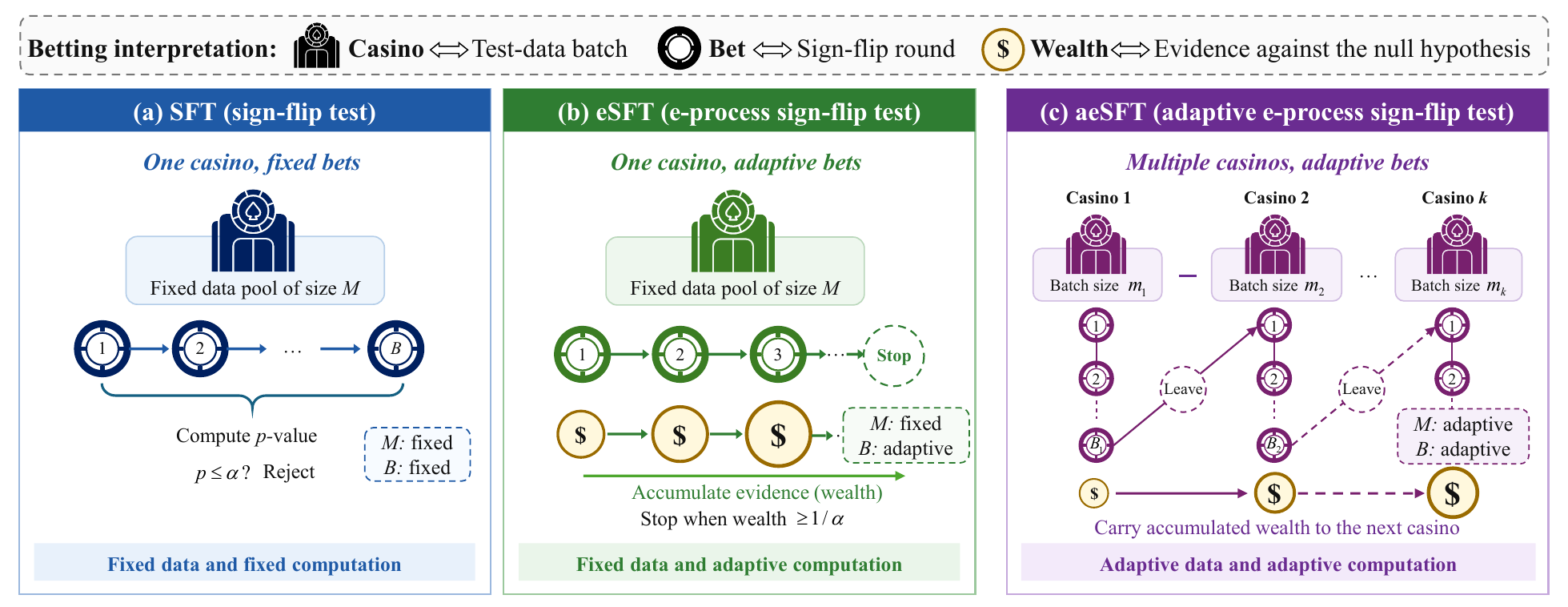}
\caption{\small Betting interpretation of SFT~\cite{sohm2026improving}, eSFT~\cite{fischer2025smc}, and aeSFT (this work).}
\label{fig:sft-esft-aesft-illustration}
\end{figure*}

\item \textbf{Theoretical guarantee.} We prove that the inter-batch
e-process is a nonnegative test martingale under the symmetry-based
null, and that the number of sign-flip rounds is a stopping time
with respect to the intra-batch history. Consequently, by Ville's inequality, rejecting the null
the first time the inter-batch e-process reaches $1/\alpha$ controls the Type-I error at
level $\alpha$ uniformly over time.

\item \textbf{Experiments.} On a binary classification task with
synthetic training data, a DT-aided wireless packet-scheduling
task in which an offline deep $Q$-network is trained on real and
DT-generated trajectories~\cite{11625736}, and a radio-map prediction task~\cite{Lee2023PMNet}, aeSFT attains a markedly higher true-positive rate than aMT~\cite{shafer2021testing} at small real-data consumption, matches the power of
fixed-sample procedures without having to pre-specify a sample size, and
keeps the false-positive rate below the target level $\alpha$.
\end{itemize}

The remainder of the paper is organized as follows.
Sec.~\ref{sec:problem} formulates the two tests and the error-control
requirement. Sec.~\ref{sec:background} and Sec.~\ref{sec:fixed-symmetry} review sequential mean testing by
betting and sign-flip testing. Sec.~\ref{sec:aesft} presents aeSFT and
its guarantee. Sec.~\ref{sec:experiments} reports the experiments, and
Sec.~\ref{sec:conclusion} concludes.

\section{Problem Definition}\label{sec:problem}

Let $A(\cdot)$ denote a fixed learning algorithm that maps a training dataset to a trained model $f$. Given a real training dataset $\mathcal{D}$ of $n$ data points drawn from the ground-truth data distribution $P$, and a synthetic dataset $\mathcal{D}^{\mathrm{syn}}$ of $n^{\mathrm{syn}}$ data points drawn from a generally different distribution $P^{\mathrm{syn}}$, we address the following question:

\begin{center}
\emph{Would augmenting the real dataset $\mathcal{D}$ with the synthetic dataset $\mathcal{D}^{\mathrm{syn}}$ improve the test-time
performance of the trained model?}
\end{center}

\noindent We formulate this question in terms of a scalar population-level loss function $\mathbb{L}(f)$ that should be minimized. This loss may be an expected prediction loss, a shifted negative expected reward, an expected task-completion time, or another task-dependent performance criterion. Let $Z\sim P$ denote a test sample, and let $\mathbb{L}(f;Z)$ denote the sample-level objective function. The sample $Z$ may be an input-label pair, an interaction episode, or another task-dependent evaluation unit. The population objective is the average loss
\begin{equation}
\mathbb{L}(f)=\mathbb{E}_{Z\sim P}\!\left[\mathbb{L}(f;Z)\right].
\label{eq:objective}
\end{equation}
For example, in the prediction case, with sample $Z=(X,Y)$, the sample-level objective is given by
\begin{equation}
\mathbb{L}(f;Z)=\ell(f(X),Y),
\label{eq:supervised-special-case}
\end{equation}
for a given non-negative loss function $\ell(\hat y,y)\geq 0$ between outcome $y$ and prediction $\hat y$.

In order to address the question above, we assume access to independent and identically distributed (i.i.d.) test data
\begin{equation}
Z_1,Z_2,\ldots,Z_m\overset{\mathrm{iid}}{\sim}P,
\label{eq:test-data-stream}
\end{equation}
drawn from the real data distribution $P$.
Acquiring test data may require making real-world measurements on a physical system~\cite{mahmood2018benchmarking}, querying human labelers~\cite{chiang2024chatbot}, or running high-fidelity emulators~\cite{polese2024colosseum}. Accordingly, it is important to use as few test data points, $m$, as possible. To accomplish this goal, we will focus on sequential tests that can stop adaptively as soon as enough evidence is available to make a decision.

\subsection{Direct Synthetic Data Test}\label{sec:direct-test}

The most direct formalization of the problem compares the model trained only on real data, i.e.,
\begin{equation}\label{eq:real-model}
f=A(\mathcal{D}),
\end{equation}
with the model trained on the union of real and synthetic data,
\begin{equation}\label{eq:synthetic-model}
f^{\mathrm{syn}}=A(\mathcal{D}\cup\mathcal{D}^{\mathrm{syn}}).
\end{equation}
Since smaller values of $\mathbb{L}(\cdot)$ are preferable, synthetic data augmentation is useful when the loss of model~\eqref{eq:synthetic-model} is smaller than that of the original model~\eqref{eq:real-model}. Define the population loss gap as $\Delta=\mathbb{L}(f)-\mathbb{L}(f^{\mathrm{syn}})$, so usefulness of synthetic data corresponds to $\Delta>0$.
Accordingly, we can formulate the problem in the form of a binary test for the null hypothesis
\begin{equation}\label{eq:H_0}
H_0:\;\Delta\le 0
\end{equation} 
that the synthetic data are not useful, against the complementary alternative hypothesis
\begin{equation}\label{eq:H_1}
H_1:\;\Delta>0.
\end{equation}
By definition, rejecting the null $H_0$ amounts to concluding that there is sufficient evidence that the synthetic data are useful for training.

\subsection{Symmetry-Based Synthetic Data Test}\label{sec:sym-test}
The direct hypothesis test~\eqref{eq:H_0}--\eqref{eq:H_1} relies on the comparison of mean values of the loss function. This approach can be sample-inefficient when the available test data are limited and the sample-level objective differences have heavy-tailed distributions under the alternative $H_1$, resulting in slow evidence accumulation against the null $H_0$~\cite{sohm2026improving}. To potentially improve testing efficiency in this regime, we also study an alternative formulation of the binary test based on a more restrictive null hypothesis. This formulation may increase the power of the test, supporting the detection of truly useful synthetic datasets with fewer test data points $m$. However, it may also produce more false positives with respect to the original direct test~\eqref{eq:H_0}--\eqref{eq:H_1}.

For any test sample $Z\sim P$, define the paired sample-level difference
\begin{equation}\label{eq:paired-difference}
\delta
=
\mathbb{L}(f;Z)
-
\mathbb{L}(f^{\mathrm{syn}};Z),
\end{equation}
whose expectation is the population loss gap
\begin{equation}
\mathbb{E}[\delta]=\Delta.
\label{eq:sym-delta}
\end{equation}
In contrast to Section~\ref{sec:direct-test}, we make two additional assumptions about the distribution of the difference~\eqref{eq:paired-difference}:

\emph{Assumption 1. Synthetic data cannot harm performance}:
The baseline model $f$, trained without synthetic data, cannot perform better than the model $f^{\mathrm{syn}}$ on average, i.e., $\Delta\geq 0$, so that the condition that the synthetic data are not useful amounts to the equality $\Delta=0$. In practice, this is approximately true if the synthetic data are sufficiently close in distribution to real data.

\emph{Assumption 2. Symmetric distribution under the null}: The condition $\Delta=0$, representing unhelpful synthetic data, is further specialized by assuming that the distribution of the paired difference $\delta$ is symmetric around zero. This assumption is reasonable when synthetic data are equally likely to help and to harm when they are not useful on average. Under this assumption, the null can be expressed as
\begin{equation}
H_0^{\mathrm{syn}}:\;\delta\overset{d}{=}-\delta,
\label{eq:H0-prime}
\end{equation}
where $\overset{d}{=}$ denotes equality in distribution. The corresponding alternative hypothesis is the complement
\begin{equation}
H_1^{\mathrm{syn}}:\;\delta\overset{d}{\neq}-\delta.
\label{eq:H1-prime}
\end{equation}
Accordingly, rejection of the specialized null $H_0^{\mathrm{syn}}$ supports the conclusion that synthetic augmentation improves over training on real data alone.

To summarize, the problem formulation in terms of hypotheses~\eqref{eq:H0-prime}--\eqref{eq:H1-prime} rests on a stronger assumption than the original direct test~\eqref{eq:H_0}--\eqref{eq:H_1}: when synthetic data are not helpful, the paired loss difference $\delta$ in~\eqref{eq:paired-difference} is symmetric around its zero mean. Rejecting the null $H_0^{\mathrm{syn}}$ in~\eqref{eq:H0-prime}--\eqref{eq:H1-prime} does not entail a rejection of the original null $H_0$ in~\eqref{eq:H_0}--\eqref{eq:H_1}, although it provides a strong signal for the usefulness of synthetic data when this assumption is approximately valid.

\subsection{Error Control}
We focus on sequential testing procedures that process test data $Z_1,Z_2,\ldots$ one by one, mapping the available data $Z^i=(Z_1,\ldots,Z_i)$ into a decision
\begin{equation}
 D(Z^i)\in\{\text{reject }H_0,\text{continue}\}.
 \label{eq:sequential-decision}
\end{equation}
We denote the stopping time at which a final decision is made as
\begin{equation}
 m_{\mathrm{stop}}=\min\left\{\min\left\{i:D(Z^i)=\text{reject }H_0\right\},M\right\},
 \label{eq:stopping-time}
\end{equation}
where $M$ is the maximum amount of data that can be collected, e.g., due to time constraints. If $D(Z^{m_{\mathrm{stop}}})=\text{continue}$, the test concludes that the null cannot be rejected. For both the direct test~\eqref{eq:H_0}--\eqref{eq:H_1} and the symmetry-based test~\eqref{eq:H0-prime}--\eqref{eq:H1-prime}, we wish the test to have a Type-I error probability no larger than a user-defined reliability level $\alpha\in(0,1)$ as
\begin{equation}
\Pr\nolimits_{H_0}\left[D(Z^{m_{\mathrm{stop}}})=\text{reject }H_0\right]\le \alpha,
\label{eq:type-I}
\end{equation}
where $\Pr\nolimits_{H_0}[\cdot]$ denotes the worst-case, i.e., largest, probability among all distributions covered by hypothesis $H_0$, and $H_0$ is replaced by $H_0^{\mathrm{syn}}$ for the symmetry-based test~\eqref{eq:H0-prime}--\eqref{eq:H1-prime}. Thanks to the constraint~\eqref{eq:type-I}, the design can ensure that the probability of incorrectly declaring synthetic data useful is maintained below level $\alpha$, where the probability is with respect to the data used for the test.

Subject to the constraint~\eqref{eq:type-I}, we wish to maximize the sample efficiency of the testing procedure by minimizing the number of test data points $m_{\mathrm{stop}}$ required to reject the null when synthetic data are actually useful for improving model performance, i.e., under the alternative $H_1$ or $H_1^{\mathrm{syn}}$.

\begin{table}[t]
\centering
\caption{Notation used throughout the paper.}
\label{tab:notation}
\small
\begin{tabular}{>{\raggedright\arraybackslash}p{0.15\linewidth}p{0.75\linewidth}}
\toprule
Symbol & Meaning \\
\midrule
$P$, $P^{\mathrm{syn}}$ & Real and synthetic data distributions. \\
$\mathcal{D}$, $\mathcal{D}^{\mathrm{syn}}$ & Real and synthetic training datasets. \\
$f$, $f^{\mathrm{syn}}$ & Models trained on $\mathcal{D}$ and $\mathcal{D}\cup\mathcal{D}^{\mathrm{syn}}$. \\
$\mathbb{L}(f;Z)$ & Sample-level loss. \\
$\mathbb{L}(f)$ & Population loss. \\
$\delta$, $\Delta$ & Paired sample-level loss difference and its population mean. \\
$H_0,H_1$ & Direct mean hypotheses based on $\Delta$. \\
$H_0^{\mathrm{syn}},H_1^{\mathrm{syn}}$ & Symmetry-based hypotheses based on $\delta$. \\
$M$, $m_k$ & Maximum test-data budget and size of adaptive batch $k$. \\
$B$, $B_k$ & Fixed and adaptive numbers of sign-flip rounds. \\
$W_k^b$, $\widetilde W_k^b$ & Intra-batch and inter-batch wealth. \\
$\alpha$ & Target Type-I error level. \\
$m_{\mathrm{init}},\beta$ & Initial batch size and batch-growth factor. \\
$\omega,\epsilon$ & Early-stopping threshold and change tolerance. \\
\bottomrule
\end{tabular}
\end{table}

\section{Adaptive Direct Synthetic Data Test}\label{sec:background}

In this section, we address the direct binary test~\eqref{eq:H_0}--\eqref{eq:H_1} by reviewing a state-of-the-art sequential testing strategy based on testing-by-betting~\cite{shafer2021testing,waudbysmith2024betting}. For any test sample $Z\sim P$, define the loss difference $\delta$ as in~\eqref{eq:paired-difference}, so that the null~\eqref{eq:H_0} can be equivalently expressed as $H_0:\mathbb{E}[\delta]\leq 0$. Assume that the sample-level loss is bounded as $0\leq\mathbb{L}(f';Z)\leq L_{\max}$ for some known constant $L_{\max}>0$ and any model $f'$. It follows that $\delta\in[-L_{\max},L_{\max}]$, so the loss difference can be normalized as
\begin{equation}
\widetilde{\delta}=\frac{\delta}{2L_{\max}}+\frac{1}{2}\in[0,1],
\label{eq:direct-risk}
\end{equation}
and the null hypothesis can be equivalently expressed as $H_0:\mathbb{E}[\widetilde{\delta}]\leq 0.5$.

The process can be interpreted as the wealth of a bettor who invests a fraction of the current wealth to bet against the null hypothesis $H_0$~\cite{shafer2021testing,waudbysmith2024betting}, as shown in~Fig~\ref{fig:sft-esft-aesft-illustration}; a larger value of the current wealth provides more evidence against the null. Under aMT~\cite{waudbysmith2024betting,zecchin2025altt}, after observing each test sample $Z_i\sim P$, the wealth statistic $E_i$ is updated as
\begin{equation}\label{eq:amt-eprocess}
E_i=E_{i-1}\cdot \bigl(1+\mu_i(\widetilde{\delta}_i-0.5)\bigr),
\end{equation}
starting from $E_0=1$, where $\mu_i$ is interpreted as the fraction of the wealth $E_{i-1}$ invested against the null $H_0:\mathbb{E}[\widetilde{\delta}]\leq 0.5$. By~\eqref{eq:amt-eprocess}, when $\mathbb{E}[\widetilde{\delta}]\leq 0.5$, the wealth decreases in expectation, while it increases otherwise. The fraction $\mu_i\in(0,1/(1-\alpha))$ can be selected using the past data $Z^{i-1}=(Z_1,\ldots,Z_{i-1})$. The null $H_0$ is rejected once the statistic crosses a threshold $\gamma$, i.e.,
\begin{equation}
E_i\geq\gamma\;\Longrightarrow\;\text{reject }H_0,
\label{eq:amt-reject}
\end{equation}
and otherwise testing continues. If no rejection has occurred up to and including time $M$, the hypothesis $H_0$ cannot be rejected.

\begin{figure*}[t]
\centering
\includegraphics[width=0.95\textwidth]{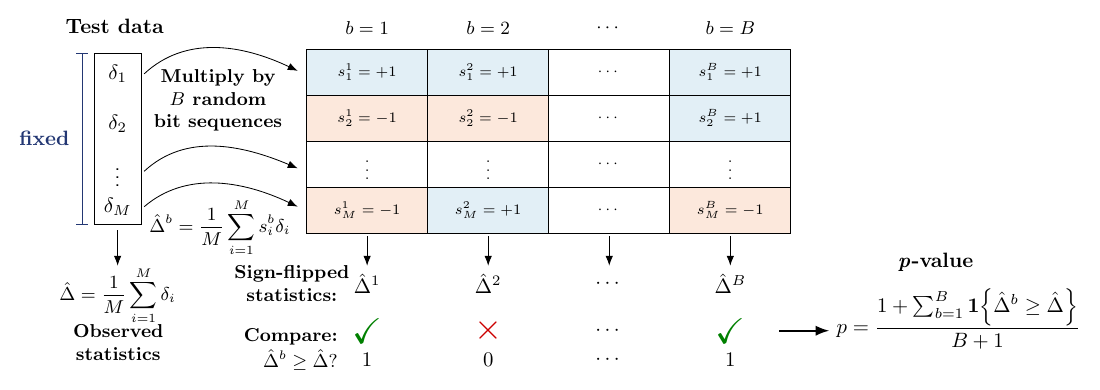}
\caption{\small Illustration of SFT~\cite{sohm2026improving}, where the test-set size $M$ and the number of sign-flip sequences $B$ are fixed in advance.}
\label{fig:fixed-signflip}
\end{figure*}

Reference~\cite{waudbysmith2024betting} shows that the statistic~\eqref{eq:amt-eprocess} forms an e-process, a generalization of sequential likelihood ratios~\cite{wald1947sequential}. The key property of an e-process is that, under the null, its expected next value conditioned on the past cannot exceed its current value. By an extension of the Markov inequality known as Ville's inequality, this implies that choosing the threshold $\gamma=1/\alpha$ in \eqref{eq:amt-reject} ensures that the Type-I error requirement~\eqref{eq:type-I} is satisfied. However, as discussed in Section~\ref{sec:sym-test}, this test may be inefficient, requiring a large number of data points $m_{\mathrm{stop}}$ to reject the null $H_0$ under the alternative $H_1$. This is the case when the paired loss differences $\delta$ are heavy-tailed under the alternative, causing evidence against the null to accumulate slowly.

\section{Fixed-Sample Symmetry-Based Synthetic Data Test}\label{sec:fixed-symmetry}

In this section, we address the symmetry-based test~\eqref{eq:H0-prime}--\eqref{eq:H1-prime} in a fixed-sample, i.e., non-sequential, setting in which the number of test samples $M$ is fixed. We first review the basic sign-flip test (SFT) used in~\cite{sohm2026improving}, and then adapt the more computationally efficient approach presented in~\cite{fischer2025smc} to the current task.

\subsection{Sign-Flip Test}\label{sec:sign-flip}
Because the test samples $Z_1,\ldots,Z_M$ are i.i.d.\ as specified in~\eqref{eq:test-data-stream}, under the null $H_0^{\mathrm{syn}}$ in~\eqref{eq:H0-prime} the paired differences have a joint distribution that is sign-flip invariant in the sense that
\begin{equation}
(\delta_1,\dots,\delta_{M})\overset{d}{=}(s_1\delta_1,\dots,s_{M}\delta_{M}),
\label{eq:sign-flip-invariance}
\end{equation}
where $\delta_i = \mathbb{L}(f;Z_i)-\mathbb{L}(f^{\mathrm{syn}};Z_i)$, for any fixed binary sequence $(s_1,\dots,s_M)\in\{-1,+1\}^M$.

Since the set of all multiplications by binary sequences is a group, one can directly apply the randomization test~\cite{ritzwoller2025randomization}. To this end, define the empirical average
\begin{equation}
\hat\Delta=\frac{1}{M}\sum_{i=1}^{M}\delta_i.
\end{equation}
As illustrated in Fig.~\ref{fig:fixed-signflip}, using the Monte Carlo version of the test, we draw $B$ independent binary sequences
\begin{equation}\label{eq:sign-flips}
\bm{s}^{b}=(s_1^{b},\dots,s_{M}^{b}),\qquad b=1,\dots,B,
\end{equation}
with independent random variables $s_i^{b}\sim\mathrm{Unif}\{-1,+1\}$,
and compute the corresponding sign-flipped statistics
\begin{equation}
\hat\Delta^{b}=\frac{1}{M}\sum_{i=1}^{M}s_i^{b}\delta_i.
\end{equation}
The quantity
\begin{equation}
p=\frac{1+\sum_{b=1}^{B}\mathbf{1}\{\hat\Delta^{b}\ge \hat\Delta\}}{B+1}
\label{eq:p-value}
\end{equation}
is then evaluated.
A small value of the \emph{p}-value \eqref{eq:p-value} means that few sign-flipped statistics produce an improvement as large as the observed one, i.e., $\hat\Delta$, providing strong evidence for the usefulness of synthetic data.
The quantity \eqref{eq:p-value} is a one-sided \emph{p}-value for the null in \eqref{eq:H0-prime}, so that the test
\begin{equation}
p\le\alpha\;\Longrightarrow\; \text{reject }H_0^{\mathrm{syn}}
\label{eq:fixed-reject}
\end{equation}
satisfies the Type-I error probability requirement $\Pr_{H_0^{\mathrm{syn}}}(\text{reject }H_0^{\mathrm{syn}})\leq\alpha$ under the null $H_0^{\mathrm{syn}}$ for any fixed number $B$ of random sign flips, where the probability is also taken over the random binary sequences.

\begin{figure*}[t]
\centering
\includegraphics[width=0.85\textwidth]{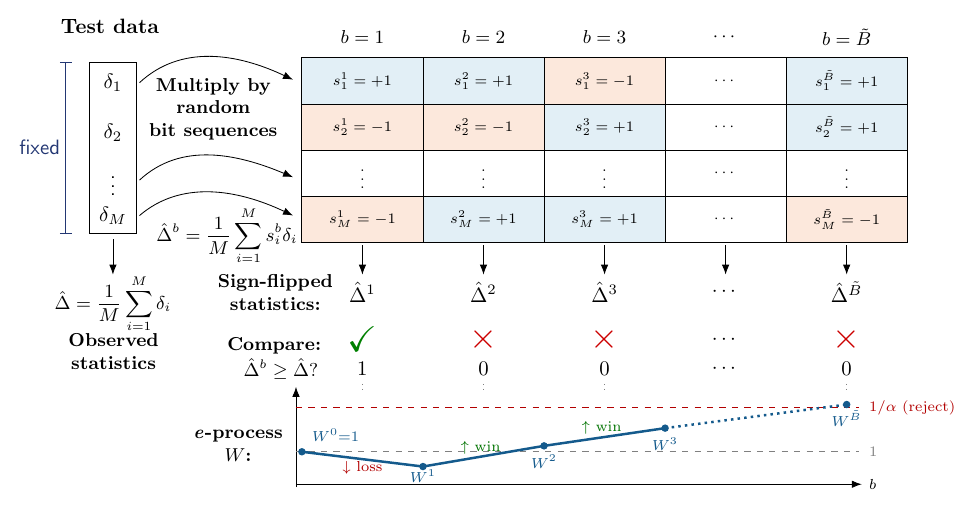}
\caption{\small Illustration of eSFT~\cite{fischer2025smc}: the process stops and rejects the null at the first step $b$ for which the e-process satisfies $W^b \ge 1/\alpha$. The size of the test dataset, $M$, is fixed in advance.}
\label{fig:fr-sequential}
\end{figure*}

\subsection{E-Process Sign-Flip Test (eSFT)}\label{Fischer_test}

A limitation of SFT \eqref{eq:sign-flips}--\eqref{eq:fixed-reject} is that the sign-flip budget $B$ must be chosen in advance. An excessively small number $B$
reduces power, while a larger $B$ increases computational complexity. Moreover, increasing the number of sign-flip sequences $B$ after inspecting the $p$-value \eqref{eq:p-value} in the hope of increasing the power would result in $p$-hacking, compromising the Type-I error guarantee \eqref{eq:type-I}~\cite{ramdas2025evalues}.

To address this problem, reference~\cite{fischer2025smc} proposes to operate in rounds $b=1,2,\dots$, drawing a fresh sign vector $\bm{s}^{b}$ in \eqref{eq:sign-flips} and computing the corresponding sign-flipped statistic $\hat\Delta^{b}$ at each round $b$. For round $b$, define the indicator term in the \emph{p}-value \eqref{eq:p-value} as
\begin{equation}\label{eq:indicator}
I^b=\mathbf{1}\{\hat\Delta^{b}\ge \hat\Delta\}.
\end{equation}
Here, $I^b=1$ means that the sign-flipped statistic beats the observed statistic and is therefore a losing round for a bet against the null; $I^b=0$ is accordingly a winning round.
The sum in \eqref{eq:p-value} up to round $b$ is thus written as
\begin{equation}\label{eq:sum-of-losses}
L^b=\sum_{j=1}^{b} I^j.
\end{equation}
Rather than relying on the \emph{p}-value \eqref{eq:p-value}, the work~\cite{fischer2025smc} uses the sum \eqref{eq:sum-of-losses} to construct an e-process.

Formally, at each step $b$, before observing the indicator $I^b$ in \eqref{eq:indicator}, fix a non-negative betting function $G^b:\{0,1\}\to\mathbb{R}_{\ge 0}$ that is measurable with respect to $\sigma(I^1,\dots,I^{b-1})$, under the constraint
\begin{equation}
G^b(0)\frac{b-L^{b-1}}{b+1}
+
G^b(1)\frac{L^{b-1}+1}{b+1}
=1.
\label{eq:budget-constraint}
\end{equation}
The betting function $G^b$ is the gambler's strategy, specifying the multiplicative payoff $G^b(I^b)$ applied to the current wealth, while the constraint \eqref{eq:budget-constraint} enforces a \emph{fair game} under the null $H_0^{\mathrm{syn}}$ in~\eqref{eq:H0-prime}, $\mathbb{E}_{H_0^{\mathrm{syn}}}[G^b(I^b)\mid I^1,\dots,I^{b-1}]=1$, so wealth does not grow in expectation. Having observed the current indicator $I^b$, the e-process, i.e., the gambler's wealth growth process, is updated as
\begin{equation}\label{eq:e-process}
W^b=W^{b-1}\,G^b(I^b).
\end{equation}
Since the game is fair under $H_0^{\mathrm{syn}}$, large wealth is unlikely when the null holds, so a large value of $W^b$ constitutes strong evidence against $H_0^{\mathrm{syn}}$.

eSFT rejects the null~\eqref{eq:H0-prime} as soon as the e-process $W^b$ crosses the threshold $1/\alpha$, i.e.,
\begin{equation}
W^b\ge \frac{1}{\alpha}\;\Longrightarrow\; \text{reject }H_0^{\mathrm{syn}}.
\label{eq:e-process-reject}
\end{equation}
This rejection rule satisfies the Type-I error requirement \eqref{eq:type-I} under $H_0^{\mathrm{syn}}$~\cite{fischer2025smc}. In the betting interpretation, the gambler continues playing until the wealth multiplies to the target $1/\alpha$, at which point the procedure stops and rejects the null. The procedure is illustrated in Fig.~\ref{fig:fr-sequential}.

Reference~\cite{fischer2025smc} also provides a mechanism to optimize the betting function $G^b$ in order to maximize the test power, i.e., to choose the gambler's strategy that grows the wealth as fast as possible when the alternative $H_1^{\mathrm{syn}}$ holds. Let
\begin{equation}
\eta=\Pr\nolimits_P(I^b=1\mid I^1,\dots,I^{b-1}),
\end{equation}
denote the probability that the current sign flip yields $\hat\Delta^{b}\ge\hat\Delta$ given the past indicators. An optimized betting function $G^b_\star$, maximizing the average log-growth of the e-process \eqref{eq:e-process} under the alternative $H_1^{\mathrm{syn}}$, is given by
\begin{equation}\label{log-opti-bet}
G^b_\star(0)=(1-\eta)\frac{b+1}{b-L^{b-1}},
\qquad
G^b_\star(1)=\eta\frac{b+1}{L^{b-1}+1}.
\end{equation}
This optimized betting function satisfies the budget constraint~\eqref{eq:budget-constraint} by construction.

Since the probability $\eta$ is unknown in practice, reference~\cite{fischer2025smc} proposes to replace it with the deterministic constant
\begin{equation}\label{eta-cal}
\hat{\eta} = \frac{1}{\bigl\lceil \sqrt{2\pi e^{1/6}}/\alpha \bigr\rceil}.
\end{equation}
With this choice, it can be proved that, if the strategy \eqref{log-opti-bet} with $\eta=\hat\eta$ is employed, whenever the \emph{p}-value calculated with a fixed number of rounds $B$ via \eqref{eq:p-value} satisfies $p\le\hat\eta$, there exists a round $\tilde B\le B$ such that $W^{\tilde B}\ge 1/\alpha$. This makes the setting \eqref{eta-cal} a theoretically grounded choice of the parameter $\hat\eta$.

\section{Adaptive Symmetry-Based Synthetic Data Test}
\label{sec:aesft}

The direct aMT formulation requires only the mean null $\mathbb{E}[\delta]\leq0$, but it also requires a known finite bound $L_{\max}$ on the sample-level loss. In contrast, SFT, eSFT, and aeSFT do not require such a bound and can therefore be applied to unbounded losses or returns; their validity instead requires i.i.d.\ paired differences that are symmetric under $H_0^{\mathrm{syn}}$. Interpreting rejection as evidence that synthetic augmentation is useful additionally relies on the assumption described in Section~\ref{sec:sym-test} that synthetic data cannot increase the population loss. Thus, the symmetry-based route trades a boundedness assumption for stronger distributional and interpretive assumptions rather than strictly dominating the direct test.

SFT \eqref{eq:sign-flips}--\eqref{eq:fixed-reject} and eSFT \eqref{eq:budget-constraint}--\eqref{eq:e-process-reject} can be directly applied to the problem of testing whether synthetic data are useful under the symmetry assumption. SFT fixes both the test sample size $M$ and the number of sign-flip sequences $B$. eSFT adapts the number of sign-flip sequences $B$, and thus the computational complexity, to the current input, but it still requires a fixed test dataset size $M$.

In practice, the main bottleneck for the test is the amount of test data required, since the test dataset $\mathcal{D}^{\mathrm{test}}$ must follow, by assumption, the ground-truth distribution $P$, while eSFT cannot address this issue. Specifically, through the betting analogy, eSFT can be regarded as a \emph{single} casino with a fixed data pool of size $M$, where the gambler must keep betting until the wealth reaches $1/\alpha$. In this analogy, a casino is a test-data batch, the stake is the current wealth, and a bet is one sign-flip round. With the data budget fixed in advance, eSFT can neither draw more data to gather additional evidence for a difficult test, nor save data through adaptive design when fewer samples would already suffice.

To address this limitation, we introduce aeSFT, which adapts both the number of sign-flip sequences, and thus the computational complexity, and the amount of test data consumed. Adopting again the betting analogy, the basic idea is to grant the gambler access to a \emph{sequence} of casinos with independent test data batches: when the current batch looks unpromising, the gambler stops early and carries the accumulated wealth to a new casino with fresh i.i.d.\ data, and rejects the null $H_0^{\mathrm{syn}}$ once the total wealth accumulated across casinos reaches $1/\alpha$. This test is formalized through two e-processes: an \emph{intra-batch} e-process that captures the wealth growth within each individual casino, and an \emph{inter-batch} e-process that aggregates the wealth growth across casinos.

\subsection{Intra-Batch E-Process}\label{sec:intra-batch}
To enable the adaptive use of test data, we process test data in sequential batches of size $m_k$ for $k=1,2,\dots$ In batch $k$, we draw a fresh non-overlapping batch
\begin{equation}
\mathcal{D}_k^{\mathrm{test}}=\{Z_{k,i}\}_{i=1}^{m_k}\overset{\mathrm{iid}}{\sim} P,
\end{equation}
in an i.i.d.\ manner from the ground-truth distribution $P$, independently of previous batches. The paired objective difference for sample $i$ within batch $k$ is given by
\begin{equation}
\delta_{k,i}=\mathbb{L}(f;Z_{k,i})
-\mathbb{L}(f^{\mathrm{syn}};Z_{k,i}),
\end{equation}
and the corresponding observed batch statistic for batch $k$ is
\begin{equation}
\hat\Delta_k=\frac{1}{m_k}\sum_{i=1}^{m_k}\delta_{k,i}.
\end{equation}

Similar to eSFT, at each inner step $b$, we evaluate the sign-flipped statistic
\begin{equation}
\hat\Delta_{k}^{b}=\frac{1}{m_k}\sum_{i=1}^{m_k}s_{k,i}^{b}\delta_{k,i},
\end{equation}
with a new sign-flip sequence $\bm{s}_k^b$ consisting of independently generated signs $\{s_{k,i}^{b}\}_{i=1}^{m_k}\overset{\mathrm{iid}}{\sim}\mathrm{Unif}\{-1,+1\}$. As in eSFT, define the indicator $I_k^b=\mathbf{1}\{\hat\Delta_{k}^{b}\ge\hat\Delta_k\}$ and the cumulative count $L_k^b=\sum_{j=1}^{b}I_k^j$. In a manner similar to \eqref{eq:e-process}, these quantities are used to update the intra-batch e-process
\begin{equation}\label{eq:intra-batch-e-process}
     W_k^b=W_{k}^{b-1}\,G_k^b(I_k^b).
\end{equation}
Following \eqref{log-opti-bet}, the betting function $G_k^b(\cdot)$ is selected as
\begin{equation}
G_k^b(0)=(1-\hat\eta)\frac{b+1}{b-L_k^{b-1}},
\qquad
G_k^b(1)=\hat\eta\,\frac{b+1}{L_k^{b-1}+1},
\end{equation}
with $\hat\eta$ given in \eqref{eta-cal}. Starting from a unit stake $W_k^0=1$, the intra-batch e-process $W_k^b$ records the factor by which the gambler's wealth has been multiplied over the first $b$ bets in casino $k$.

Within each batch $k$, the number of sign-flip rounds $B_k$ is chosen adaptively. Since aeSFT allows the gambler to move to a fresh casino, the procedure can stop early within a batch without rejecting the null if the current batch becomes unpromising. Specifically, after observing the outcomes up to round $b$, the decision of whether to close the current batch may follow any criterion that depends only on the information collected so far, i.e., $\{I_k^1,\dots,I_k^{b}\}$, in order to preserve the e-process property. For example, the procedure could stop once $W_k^b \ge 1/\alpha$ or $W_k^b\le \alpha$. The next batch size $m_{k+1}$ is selected before drawing the next test batch. The specific intra-batch stopping rules and batch-size schedule adopted in aeSFT are detailed in Section~\ref{sec:schedule}.

\subsection{Inter-Batch E-Process}\label{sec:inter-batch}

\begin{figure*}[t]
\centering
\includegraphics[width=0.9\textwidth]{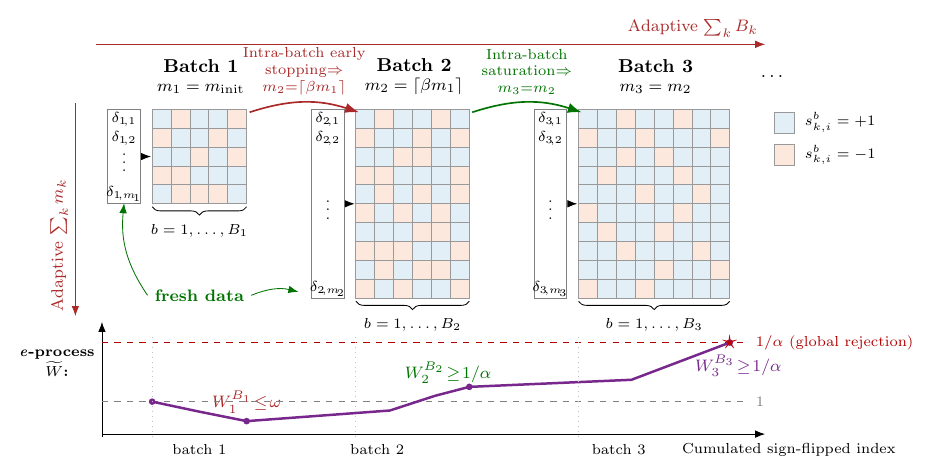}
\caption{\small Illustration of aeSFT. Within each batch $k$, the test uses a fresh test set $\mathcal{D}_k^{\mathrm{test}}$ of size $m_k$ together with an adaptive number $B_k$ of sign-flip rounds.}
\label{fig:adaptive-syn}
\end{figure*}

Let $W_{k}^{B_k}$ denote the final value of the intra-batch e-process for batch $k$. Within each batch, $b=1,\ldots,B_k$, and the pairs $(k,b)$ are ordered lexicographically. The inter-batch e-process is then defined as the product of values of all intra-batch e-processes that have been tested, as depicted in Fig.~\ref{fig:adaptive-syn}, with the e-value after $b$ sign-flip rounds of batch $k$ given by
\begin{equation}\label{eq:inter-batch-e-process}
\widetilde{W}_{k}^{b} = W_{k}^{b}  \times  \prod_{j=1}^{k-1} W_{j}^{B_j},
\end{equation}
starting from $\widetilde{W}_{1}^{0}=1$. In the betting picture, $\widetilde{W}_{k}^{b}$ is the gambler's total wealth. The factor $\prod_{j=1}^{k-1} W_{j}^{B_j}$ is the wealth inherited from previously visited casinos, serving as the starting stake in casino $k$; it is then multiplied by the current within-batch wealth growth $W_k^b$, so that evidence accumulates across batches. The product is natural here because it represents sequential reinvestment of wealth across fresh batches and preserves anytime-valid continuation under predictable adaptation~\cite{ramdas2025evalues}. This test is an e-process under $H_0^{\mathrm{syn}}$, as established in the following theorem, and satisfies the Type-I requirement in \eqref{eq:type-I} while remaining adaptive in both the number of sign-flip sequences $\sum_k B_k$ and the test dataset size $\sum_k m_k$.

\begin{theorem}\label{prop:eprocess}
Under the null $H_0^{\mathrm{syn}}$ in \eqref{eq:H0-prime}, assume that each test batch is independent of previous batches, that $m_k$ is chosen before drawing batch $k$, and that $B_k$ is a stopping time with respect to the intra-batch sign-flip history. Then the wealth growth process $\widetilde{W}_{k}^{b}$ is an e-process. Rejecting the null $H_0^{\mathrm{syn}}$ at the first $(k,b)$ for which $\widetilde{W}_{k}^{b}\ge 1/\alpha$ controls the Type-I error at level $\alpha$, i.e.,
\begin{equation}
\Pr\nolimits_{H_0^{\mathrm{syn}}}\Bigl(\exists(k,b):\;\widetilde{W}_{k}^{b}\ge 1/\alpha\Bigr)\le\alpha.
\end{equation}
\end{theorem}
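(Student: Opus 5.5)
The plan is to show that $\widetilde W_k^b$, indexed by the lexicographically ordered pairs $(k,b)$, is a nonnegative supermartingale (in fact a martingale) with initial value $1$ under $H_0^{\mathrm{syn}}$. The filtration is the natural one $\mathcal{F}_k^b$, generated by all previous batches (data, sign flips, $m_j$, $B_j$ for $j<k$), by $m_k$ and $\mathcal{D}_k^{\mathrm{test}}$, and by the sign vectors $\bm s_k^1,\dots,\bm s_k^b$. Once that is in place, Ville's inequality gives $\Pr_{H_0^{\mathrm{syn}}}(\sup_{(k,b)}\widetilde W_k^b\ge 1/\alpha)\le \alpha$, which is exactly the claimed uniform Type-I bound. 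Since a test martingale is in particular an e-process, the first claim also follows.

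\textbf{Intra-batch fairness.} Fix $k$ and condition on everything before batch $k$, including $m_k$. Because $m_k$ is chosen before the batch is drawn and the batch is fresh and i.i.d., the sign-flip invariance \eqref{eq:sign-flip-invariance} holds for $(\delta_{k,1},\dots,\delta_{k,m_k})$ conditionally on the past. The key exchangeability fact is then the following. Write $\bm s_k^0=\mathbf 1$ for the observed statistic. Under the null, $(\bm s_k^0\odot\bm\delta_k,\bm s_k^1\odot\bm\delta_k,\dots)$ is exchangeable, since multiplying by an independent uniform sign vector $\bm u$ maps the tuple to $(\bm u\odot\bm\delta_k,\dots)$, which has the same law and in which all entries are i.i.d. sign-flips of a common vector. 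Hence the rank of $\hat\Delta_k$ among $\hat\Delta_k,\hat\Delta_k^1,\dots,\hat\Delta_k^b$ is uniform, with ties broken at random or handled conservatively. This yields, as in \cite{fischer2025smc},
\[
\Pr\bigl(I_k^b=1\mid I_k^1,\dots,I_k^{b-1}\bigr)=\frac{L_k^{b-1}+1}{b+1}
\]
(or at least this value under ties, which only helps since $G_k^b(1)\le G_k^b(0)$ in the relevant regime). By the budget constraint \eqref{eq:budget-constraint}, $\mathbb E[G_k^b(I_k^b)\mid\mathcal F_k^{b-1}]\le 1$. This is the step I expect to be the main obstacle, and I will mostly invoke the eSFT result of \cite{fischer2025smc}. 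The care needed is that the conditioning here is on $\mathcal F_k^{b-1}$, which includes past batches and $m_k$, rather than only on the indicators. Independence of batch $k$ from the past, together with $m_k$ being predictable, reduces this to the single-batch statement. The data $\mathcal D_k^{\mathrm{test}}$ itself should not be placed in the conditioning; the filtration is generated by the indicators, which suffices for the betting functions to be predictable.

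\textbf{Gluing across batches.} By definition \eqref{eq:inter-batch-e-process}, $\widetilde W_k^b=\widetilde W_k^{b-1}G_k^b(I_k^b)$ for $b\le B_k$, and $\widetilde W_{k+1}^0=\widetilde W_k^{B_k}$ since $W_{k+1}^0=1. The transition from batch $k$ to $k+1$ is therefore a multiplicative factor $1$. Since $\widetilde W_k^{b-1}$ is $\mathcal F_k^{b-1}$-measurable, the previous step gives $\mathbb E[\widetilde W_k^b\mid\mathcal F_k^{b-1}]\le\widetilde W_k^{b-1}$. Because $B_k$ is a stopping time for the intra-batch history, the event $\{b\le B_k\}$ is determined by $\mathcal F_k^{b-1}$, so the lexicographic index set is itself generated predictably and the process is a nonnegative supermartingale along it. Optional stopping at the truncation $M$, or at any stopping time, does not change this. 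Finally, Ville's inequality gives the result, and since rejection occurs only when the supremum is at least $1/\alpha$, \eqref{eq:type-I} holds.
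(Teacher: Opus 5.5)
Your proof follows essentially the same route as the paper's. Each bet is fair under $H_0^{\mathrm{syn}}$ via the budget constraint (imported from eSFT). The inherited wealth $\prod_{j<k}W_j^{B_j}$ and $m_k$ are predictable. The fresh batch is independent of the past, so cross-batch conditioning leaves the law of the current indicators unchanged. This gives a test martingale along the lexicographic index, and Ville's inequality finishes. Your explicit observation that $\{b\le B_k\}$ is $\mathcal F_k^{b-1}$-measurable makes precise a point the paper leaves implicit.

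Two details need fixing.

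\emph{Filtration.} The filtration you define at the outset contains $\mathcal D_k^{\mathrm{test}}$ and the raw sign vectors $\bm s_k^1,\dots,\bm s_k^b$ of the \emph{current} batch. Relative to that filtration the bets are not fair. Given $\bm\delta_k$, the conditional loss probability is $\Pr_{\bm s}(\hat\Delta_k^b\ge\hat\Delta_k\mid\bm\delta_k)$, a function of the data, not $(L_k^{b-1}+1)/(b+1)$. You notice this later. Define the filtration from the start as the one you actually use: anything from past batches, together with $m_k$ and $I_k^1,\dots,I_k^b$. This is essentially the paper's $\mathcal H_k^b$.

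\emph{Ties.} Your parenthetical fallback for ties is false. Counting ties as losses can make the conditional loss probability \emph{smaller} than $(L_k^{b-1}+1)/(b+1)$, even in the regime $G_k^b(1)\le G_k^b(0)$. Take $m_k=1$: a single win reveals $\delta_{k,1}>0$, after which $\Pr(I_k^b=1\mid\cdot)=1/2$. At $b=4$ with $L_k^3=2$, the fair value is $3/5$, and the conditional expected payoff is $5/4-5\hat\eta/12>1$. So you must rely on randomized tie-breaking, or on almost surely distinct statistics. The paper's proof makes the same assumption tacitly when it asserts $\mathbb E[G_k^b(I_k^b)\mid\mathcal F_k^{b-1}]=1$.
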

\begin{proof}
The proof is provided in Appendix~\ref{app:proof}.
\end{proof}

\subsection{Adaptive Budget Schedule}\label{sec:schedule}

The inter-batch e-process of Section~\ref{sec:inter-batch} renders both the total number of sign-flip rounds $\sum_k B_k$ and the total test dataset size $\sum_k m_k$ adaptive. Building on this, we specify the intra-batch stopping mechanism introduced in Section~\ref{sec:intra-batch} and the corresponding batch-size schedule as follows.

\paragraph{Sign-flip round budget}
In eSFT~\cite{fischer2025smc}, the test set is fixed, so the gambler faces a \emph{single} casino and has to continue betting unless the wealth reaches $1/\alpha$. Since further batches are always available in aeSFT, the intra-batch stopping mechanism of Section~\ref{sec:intra-batch} can instead close the current batch without rejecting the null and continue testing with fresh data.

On the early stopping side, persisting on an unpromising batch wastes sign-flip rounds and drives $W_k^{B_k}$ down, so that more subsequent data are needed for $\widetilde{W}_{k}^{b}$ to recover. As the gambler can move to a fresh casino, we therefore declare \emph{intra-batch early stopping} at a threshold $\omega>\alpha$, in order to cut losses before a batch reduces wealth by a factor as severe as $\alpha$. We also require $\omega<1$ so that early stopping is triggered only after the within-batch wealth has fallen below its unit starting value.

On the rejection side, the hard ceiling $W_k^b\ge 1/\alpha$ is tailored to the single-casino setting because it stops as soon as the null can be rejected. However, in aeSFT, the gambler may enter casino $k$ with inherited wealth below 1, so stopping as soon as $W_k^b\ge 1/\alpha$ is not sample efficient when the batch can still contribute fresh evidence for rejection. We therefore replace the hard ceiling with a \emph{change-based} criterion governed by a tolerance $\epsilon>0$: the batch is closed for \emph{intra-batch rejection} only once consecutive winning rounds no longer increase the wealth appreciably, indicating that the evidence in the data batch has been largely exhausted.

Formally, batch $k$ stops at the smallest round $b$ satisfying one of the following conditions:
\begin{equation}\label{eq:intra-stop}
\begin{cases}
b\geq 2,\;\; I_k^b=I_k^{b-1}=0,\;\; W_k^b - W_k^{b-1}\le \epsilon,
\\
\qquad\qquad\qquad\qquad\qquad\qquad\quad\text{intra-batch rejection},\\
W_k^b\le \omega,\qquad \qquad \qquad\quad\text{intra-batch early stopping},
\end{cases}
\end{equation}
where $I_k^b=0$ marks a winning round. Since $B_k$ remains a stopping time with respect to $\{I_k^1,\dots,I_k^{B_k}\}$, this rule preserves the e-process property.

\begin{algorithm*}[t]
\caption{Adaptive E-Process Sign-Flip Test (aeSFT)}
\label{alg:adaptive-syn}
\begin{algorithmic}[1]
\Require Trained models $f^{\mathrm{syn}},f$; level $\alpha\in(0,1)$; maximum test-data budget $M$; initial batch size $m_{\mathrm{init}}\in(0,M]$; growth factor $\beta>1$; intra-batch early stopping threshold $\omega\in(\alpha,1)$; rejection tolerance $\epsilon>0$; betting parameter $\hat\eta$ from~\eqref{eta-cal};
\State $k\gets 0$, \; $m_1\gets m_{\mathrm{init}}$, \; $\widetilde{W}_1^0\gets 1$.
\Loop
  \State $k \gets k+1$
  \State Draw fresh non-overlapping batch $\mathcal{D}_k^{\mathrm{test}}=\{Z_{k,i}\}_{i=1}^{m_k}$ \Comment{Fresh real test data}
  \State Compute $\delta_{k,i}$ and $\hat\Delta_k=(m_k)^{-1}\sum_i \delta_{k,i}$; set $W_{k}^{0}\gets 1$, $b\gets 0$ \Comment{Initialize batch}
  \Loop \Comment{Within-batch sign-flip test}
     \State $b\gets b+1$
     \State Draw signs $(s_{k,i}^{b})$, compute $\hat\Delta_{k}^{b}$ and $I_k^b$
     \State Update $W_k^{b}\gets W_{k}^{b-1}\,G_k^b(I_k^b)$
     \State $\widetilde{W}_{k}^{b}\gets W_k^b \times  \prod_{j=1}^{k-1} W_{j}^{B_j}$
      \If{$\widetilde{W}_{k}^{b}\ge 1/\alpha$} \Return \textsc{Reject $H_0^{\mathrm{syn}}$}  \Comment{Inter-batch rejection}\EndIf
      \If{$b\ge 2$ and $I_k^b=I_k^{b-1}=0$ and $ W_k^b-W_k^{b-1}\le\epsilon$} $B_k=b$, \textbf{break} \Comment{Intra-batch rejection}\EndIf
     \If{$W_k^b\le \omega$} $B_k=b$, \textbf{break} \Comment{Intra-batch early stopping}\EndIf
  \EndLoop
  \If{$W_k^{B_k}\le \omega$} \Comment{Intra-batch early stopping: enlarge next batch}
       \State $m_{k+1}\gets\lceil\beta\,m_k\rceil$ %\Comment{Enlarge next batch}
  \Else \Comment{Intra-batch rejection: keep batch size}
       \State $m_{k+1}\gets m_k$
  \EndIf
    \If{$\sum_{j=1}^{k+1}m_j>M$} \Return \textsc{Fail to reject $H_0^{\mathrm{syn}}$} \EndIf
\EndLoop
% \State \textbf{Output:} decision and anytime-valid \emph{p}-value $p=1/\max_{(k',b')\le(k,b)} \widetilde{W}_{k'}^{b'}$.
\end{algorithmic}
\end{algorithm*}

\paragraph{Test dataset budget}
While the inter-batch e-process already makes the overall data consumption $\sum_k m_k$ adaptive, further adapting the size $m_k$ of each individual batch improves the power. To economize on test data, one would like to begin testing with a small initial batch size $m_1=m_{\mathrm{init}}$. A small batch, however, may be insufficient to represent the ground-truth distribution $P$, so that sampling randomness masks a real but moderate improvement and the corresponding batch ends in intra-batch early stopping. To address this without sacrificing the small initial size, we enlarge the next batch immediately whenever the current batch ends in intra-batch early stopping. The next size $m_{k+1}$ is chosen \emph{before} drawing batch $k+1$, using only information from batches $1,\dots,k$. Thus, $m_{k+1}$ is measurable with respect to the history through the end of batch $k$, which is the predictability condition used in Theorem~\ref{prop:eprocess}:
\begin{equation}
m_{k+1}=
\begin{cases}
\lceil\beta\,m_k\rceil, & \text{if batch $k$ ended in early stopping},\\[2pt]
m_k, & \text{otherwise},
\end{cases}
\end{equation}
with growth factor $\beta>1$. The procedure declares \textsc{reject $H_0^{\mathrm{syn}}$} the first time $\widetilde{W}_{k}^{b}\ge 1/\alpha$, and otherwise keeps drawing batches while test data remain available. If the maximum budget $M$ is exhausted without rejection, the procedure concludes that $H_0^{\mathrm{syn}}$ cannot be rejected. Algorithm~\ref{alg:adaptive-syn} summarizes aeSFT.

% \paragraph{Remark on the within-batch bet.}
% In the standalone Fischer--Ramdas test, a ``bet everything on a win''
% short-circuit (effectively setting the betting probability to zero) is
% sometimes used to accelerate rejection within a batch. This short-circuit
% must be disabled here: it would force $W_k^{\mathrm{loc}}=0$ on any
% subsequent local loss, collapse $W_k^{\mathrm{agg}}$, and remove the
% recovery chance that the adaptive batch-size schedule is designed to
% provide.

\section{Experiments}\label{sec:experiments}

In this section, we empirically evaluate aeSFT via a binary classification task with synthetic training data, a wireless packet-scheduling task with synthetic environments, and a radio-map prediction task with low-fidelity ray-tracing data. The baselines include aMT~\cite{zecchin2025altt} described in Section~\ref{sec:background}, eSFT~\cite{fischer2025smc} presented in Section~\ref{Fischer_test}, and a fixed-sample $t$-test~\cite{hemerik2026multiple}. The $t$-test is based on the \emph{p}-value computed from the mean and variance of the paired differences $\delta$ at a prespecified sample size, and requires a further assumption that these differences follow a Gaussian distribution.

Unless otherwise stated, we set the target level to $\alpha=0.1$, use a maximum test-data budget $M=2{,}000$ for all methods, an initial batch size $m_{\mathrm{init}}=200$, an intra-batch early-stopping threshold $\omega=0.5$, a batch-growth factor $\beta=1.2$, and a change tolerance $\epsilon=0.1$ for aeSFT, and use the approximate growth-rate adaptive to the particular alternative (aGRAPA) betting strategy~\cite{zecchin2025altt} to determine the parameter $\mu_i$ in~\eqref{eq:amt-eprocess} for aMT.

\subsection{Binary Classification}
\label{sec:logistic-classify-experiment}

\noindent\textbf{Setting.} We consider a binary classification task with five-dimensional inputs and an $L_2$-regularized logistic-classification model trained using binary cross-entropy loss~\cite{simeone2022machine}. To construct the classification task, we generate a random orthogonal matrix $U$ by applying QR decomposition to a $5\times5$ matrix with i.i.d.\ standard Gaussian entries and set $\Sigma=U\Lambda U^\top$, where $\Lambda=\operatorname{diag}(1,\rho,\rho^2,\rho^3,\rho^4)$ with $\rho=0.65$. For each run, the entries of the weight vector $w$ and noise vector $\xi$ are independently drawn from uniform distributions on $[-1,1]$ and $[-0.25,0.25]$, respectively. Real and test inputs follow the distribution $P_X=\mathcal{N}(0,\Sigma)$, and their labels are generated as $y\mid x\sim\mathrm{Bernoulli}(\sigma(x^\top w))$. 

Synthetic inputs follow a different distribution $P_X^{\mathrm{syn}}=\mathcal{N}(0,\Sigma_r)$, where $\Sigma_r$ is the leading rank-$r$ approximation of $\Sigma$ obtained from its eigendecomposition, and their labels are similarly generated using a perturbed parameter $w^{\mathrm{syn}}=w+\xi$. Consequently, the synthetic data differ from the real data through both covariance matrix approximation and label-mapping noise. The dataset sizes are set to $n=100$ for the real dataset and $n^{\mathrm{syn}}=50$ for the synthetic dataset, respectively.

\noindent\textbf{Evaluation.} The ground-truth usefulness of a synthetic dataset $\mathcal{D}^{\mathrm{syn}}$ is evaluated via the true population loss difference $\Delta = \mathbb{L}(f)-\mathbb{L}(f^{\mathrm{syn}})$ between models $f$ and $f^{\mathrm{syn}}$, which is estimated on an independent held-out set of $10{,}000$ samples.

\noindent\textbf{Results.} First, we examine a setting in which the synthetic data are useful, i.e., the null~\eqref{eq:H_0} is false, and thus we have $\Delta>0$. Fig.~\ref{fig:logistic-classify-evidence-data} visualizes how the different methods accumulate evidence against the null as a function of consumed test data. eSFT uses a fixed test batch of size $M=2{,}000$, and thus its e-value remains at one until all $2{,}000$ data samples have been processed and then jumps at the decision point. Similarly, the $t$-test rejects the null after consuming all $2{,}000$ test samples, with its evidence value plotted as the reciprocal of its $p$-value.

\begin{figure}[t]
\centering
\includegraphics[width=0.5\textwidth]{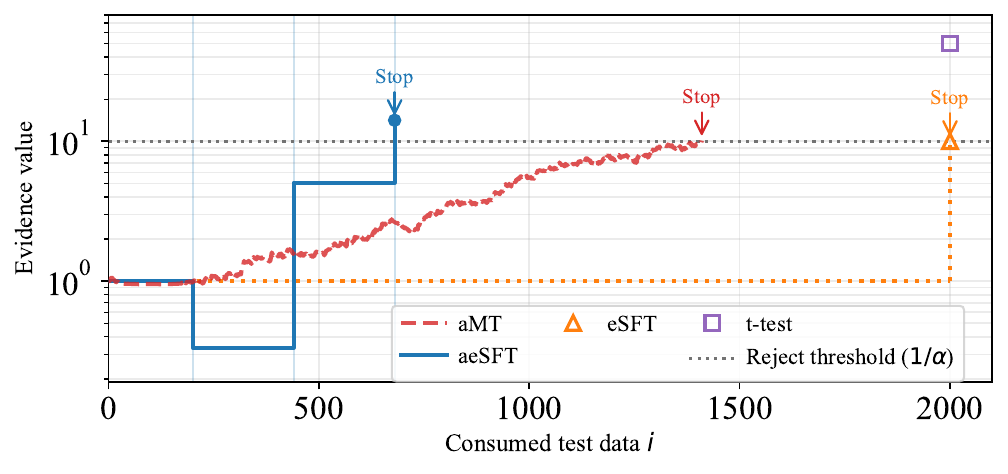}
\caption{\small Evidence accumulation versus consumed test data in a test instance where synthetic data are useful ($\Delta>0$). The reciprocal $t$-test \emph{p}-value is shown only as a visual score. The vertical axis is logarithmic.}
\label{fig:logistic-classify-evidence-data}
\end{figure}

In contrast, aeSFT processes test data in adaptive batches, producing a stepwise trajectory that reaches the rejection threshold after consuming $680$ data samples. The aMT baseline also updates its e-value after each individual test observation and reaches the rejection threshold after consuming approximately $1{,}410$ data samples. Thus, in this instance, aeSFT accumulates sufficient evidence for rejection using fewer test samples than all three baselines.

\begin{figure}[t]
\centering
\includegraphics[width=0.45\textwidth]{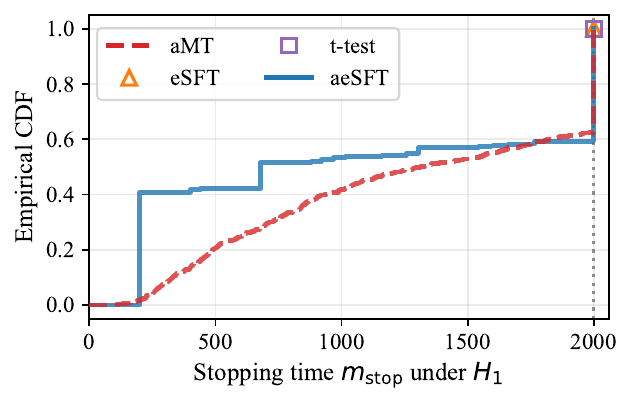}
\caption{\small Empirical CDF of stopping time $m_{\mathrm{stop}}$ under the alternative hypothesis $H_1$ (synthetic data are useful) for aMT, eSFT, $t$-test, and aeSFT in the binary classification task.}
\label{fig:logistic-classify-mstop-h1}
\end{figure}

Fig.~\ref{fig:logistic-classify-mstop-h1} plots the empirical cumulative distribution function (CDF) of the stopping time $m_{\mathrm{stop}}$ under the alternative hypothesis $H_1$, i.e., synthetic data are useful, for each method. The distribution is obtained by selecting runs with $\Delta>0$, i.e., runs in which synthetic data are useful. Both aMT and aeSFT can stop before the maximum budget on easier test instances, whereas eSFT and the $t$-test always make their decisions at the prespecified fixed sample size $M=2{,}000$. In particular, aeSFT places substantially more probability on small stopping times, demonstrating its ability to terminate early when the evidence is strong.

\begin{figure}[t]
\centering
\includegraphics[width=0.45\textwidth]{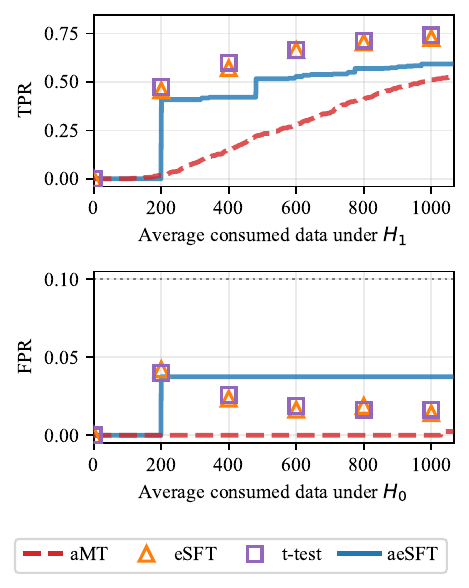}
\caption{\small TPR and FPR over $1{,}000$ independent synthetic data tests with target level $\alpha=0.1$ for aMT, eSFT, $t$-test, and aeSFT.}
\label{fig:logistic-classify-repetitions}
\end{figure}

To further evaluate the sample efficiency of aeSFT and aMT on average, we report the true-positive rate (TPR) and false-positive rate (FPR) as a function of the testing round $i$, corresponding to the use of $i$ data points. The TPR is given by
\begin{equation}
\operatorname{TPR}(i)=\mathbb{E}_{H_1}\!\left[\mathbf{1}\{D(Z^i)=\text{reject }H_0\}\right],
\end{equation}
and the FPR is given by
\begin{equation}
\operatorname{FPR}(i)=\mathbb{E}_{H_0}\!\left[\mathbf{1}\{D(Z^i)=\text{reject }H_0\}\right],
\end{equation}
where $D(Z^i)=D(Z^{m_{\mathrm{stop}}})$ for $i\geq m_{\mathrm{stop}}$. The corresponding average stopping time is evaluated as $\mathbb{E}_{H_j}[m_{\mathrm{stop}}(i)]$ for $j\in\{0,1\}$, where $m_{\mathrm{stop}}(i)=m_{\mathrm{stop}}$ for $i\geq m_{\mathrm{stop}}$ and $m_{\mathrm{stop}}(i)=i$ for $i<m_{\mathrm{stop}}$. The plotted FPR uses the same direct null $\Delta\leq0$ for every method. This empirical comparison differs from the formal guarantees: aMT controls Type-I error under the direct mean null $H_0$, whereas SFT, eSFT, and aeSFT control it under the symmetry null $H_0^{\mathrm{syn}}$. Among the $1{,}000$ independent test instances, $573$ satisfy $H_1$ and $427$ satisfy $H_0$. Fig.~\ref{fig:logistic-classify-repetitions} plots TPR and FPR against average consumed data for aMT, eSFT, $t$-test, and aeSFT.

For reference, each eSFT and $t$-test point corresponds to a separately prespecified fixed dataset size. For example, if $M=200$ is selected before testing, one cannot inspect the result and then increase $M$ to continue the same test, which would result in $p$-hacking and may compromise the Type-I error guarantee. In contrast, aMT and aeSFT permit statistically valid continuation, thereby offering greater flexibility to improve power when the initial batch is insufficient. Among the adaptive schemes, as shown in Fig.~\ref{fig:logistic-classify-repetitions}, aeSFT achieves significantly higher TPR than aMT, especially when the average consumed test dataset size is small, demonstrating a higher sample efficiency. Because aeSFT begins with an initial batch size of $m_{\mathrm{init}}=200$ points, its TPR curve is necessarily stepwise and cannot rise before the initial batch has been consumed. Its empirical FPR under the null $H_0$ remains below the target level $\alpha=0.1$.

\begin{figure}[t]
\centering
\includegraphics[width=0.5\textwidth]{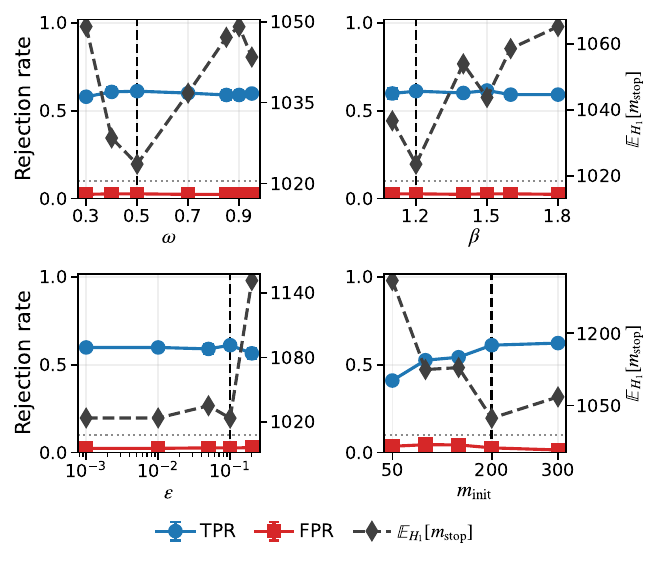}
\caption{\small aeSFT parameter ablation in the binary classification experiment. Solid curves show TPR and FPR, diamonds show $\mathbb{E}_{H_1}[m_{\mathrm{stop}}]$, black dashed lines mark the selected values, and horizontal dotted lines mark $\alpha=0.1$.}
\label{fig:aesft-ablation}
\end{figure}

\begin{figure*}[t]
\centering
\includegraphics[width=0.98\textwidth]{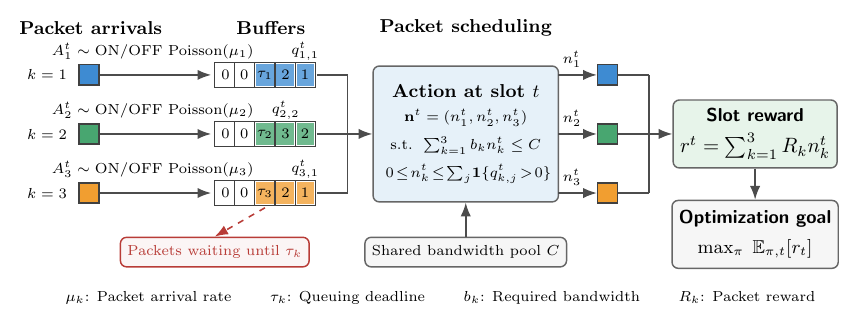}
\caption{\small Three-class packet-scheduling task~\cite{vanhuynh2019optimal}. The scheduling policy allocates a shared bandwidth pool for packets to maximize the expected network reward.}
\label{fig:network-packet-task}
\end{figure*}

Fig.~\ref{fig:aesft-ablation} reports a repeated one-at-a-time ablation of the four aeSFT design parameters. We treat a TPR as near-optimal when its difference from the largest mean TPR is within the combined $98\%$ standard-error scale, and then select the parameter setting with the smallest $\mathbb{E}_{H_1}[m_{\mathrm{stop}}]$. The resulting setting is $\omega=0.5$, $\beta=1.2$, $\epsilon=0.1$, and $m_{\mathrm{init}}=200$, for which the mean TPR is $0.612\pm0.026$, the mean FPR is $0.027\pm0.007$, and $\mathbb{E}_{H_1}[m_{\mathrm{stop}}]=1{,}024$ samples. For comparison, $m_{\mathrm{init}}=300$ slightly raises the mean TPR to $0.624$ but also raises $\mathbb{E}_{H_1}[m_{\mathrm{stop}}]$ to $1{,}068$ samples. We therefore retain $m_{\mathrm{init}}=200$ as the more data-efficient choice.

\subsection{Wireless Network Packet Scheduling}
\label{sec:network-packet-experiment}

\noindent\textbf{Setting.} We now consider a packet-scheduling task in wireless networks~\cite{vanhuynh2019optimal,tao2025digital}. As illustrated in Fig.~\ref{fig:network-packet-task}, data packets belong to three classes with class-dependent arrival rates $\mu_k$ for $k=1,2,3$. A Pareto ON/OFF-modulated Poisson process is employed to model traffic bursts and temporal variability~\cite{8718631}. For each class, the packet-count rate switches between $3\mu_k$ in the ON state, i.e., burst periods, and $0.25\mu_k$ in the OFF state, i.e., idle periods. The ON and OFF durations follow Pareto distributions with shape parameters $1.4$ and $1.6$ and scale parameters $3$ and $10.5$ slots, respectively. These scales yield a long-run ON fraction of approximately $0.25$, preserving the mean arrival rate $\mu_k$. Packets of class $k$ require $b_k$ units of bandwidth when served and yield network reward $R_k$ each. Packets not served immediately can remain in the corresponding buffer for at most $\tau_k$ time slots.

Let $q_{k,j}^t\in[0,\tau_k]$ denote the remaining waiting time of the $j$-th packet in the class-$k$ buffer at slot $t$. Occupied positions are ordered by increasing remaining time and followed by zero padding up to the buffer capacity $N_{\max}$. The state is given by
$
s^t={(q_{k,j}^t)}_{k=1,\ldots,3;j=1,\ldots,N_{\max}}.
$
Each unserved packet's remaining time decreases by one after a slot, and the packet expires upon reaching zero. Based on this state, a packet scheduler selects how many packets of each class to serve at each time slot. The resulting decision, denoted by
$
\bm n^t=(n_1^t,n_2^t,n_3^t),
$
is subject to $0\leq n_k^t\leq \sum_{j}\mathbf{1}\{q_{k,j}^t>0\}$ as well as the shared-bandwidth constraint
$
\sum_{k=1}^{3}b_k n_k^t\leq C.
$
The corresponding total reward at time slot $t$ is
$
r^t=\sum_{k=1}^{3}R_k n_k^t.
$
The optimization goal is to learn a policy that maximizes the long-term average reward in the network, thereby balancing immediate packet rewards against congestion and deadline expiration.

To generate synthetic training data for this task, we construct a DT environment of the considered system. In particular, we perturb the real arrival rate and deadline of each packet class as
\begin{equation}
\mu_k^{\mathrm{syn}}=\mu_k+\xi_{k}^\mu,
\qquad
\tau_k^{\mathrm{syn}}=\tau_k+\xi_{k}^\tau,
\end{equation}
where the perturbations $\xi_{k}^\mu$ and $\xi_{k}^\tau$ are drawn from uniform distributions on $[-\mu_k/2, \mu_k/2]$ and $[-\tau_k/2, \tau_k/2]$, respectively, across repeated trials. The bandwidth requirements, packet rewards, shared capacity, state definition, and action space are kept unchanged.

We collect a fixed real dataset $\mathcal{D}$ consisting of trajectories generated in the real environment and a synthetic dataset $\mathcal{D}^{\mathrm{syn}}$ consisting of trajectories generated in the DT environment. A trajectory is a sequence of states, actions, rewards, and state transitions generated during one decision episode. We use deep Q-learning as the learning algorithm $A(\cdot)$. Training is offline: the Q-network learns only from the trajectories stored in these datasets, without further interaction with either environment~\cite{prudencio2024offline}. Both models $f$ and $f^{\mathrm{syn}}$ are implemented as deep Q-networks (DQNs) with the same architecture~\cite{mnih2015human}, and are trained on $\mathcal{D}$ and $\mathcal{D}\cup\mathcal{D}^{\mathrm{syn}}$, respectively. The complete configuration is summarized in Appendix~\ref{app:experiment-config}.

\noindent\textbf{Evaluation.} For a real-environment interaction episode $Z$ of $T=60$ slots, the implemented evaluation loss is the complement of the average normalized episodic return,
\begin{equation}
\mathbb{L}(f;Z)=1-\frac{1}{\bar{r}\cdot T}\sum_{t=1}^{T}r^t\in[0,1],
\label{eq:wireless-episode-loss}
\end{equation}
where $r^t$ is the unnormalized slot reward attained by model $f$ and $\bar{r}$ is a valid per-slot reward upper bound. Consequently, the paired loss difference $\delta=\mathbb{L}(f;Z)-\mathbb{L}(f^{\mathrm{syn}};Z)$ equals the difference of normalized returns of models $f$ and $f^{\mathrm{syn}}$. The loss is bounded with $L_{\max}=1$, so aMT is applicable.

After training the two models, we determine whether the synthetic data from the DT are actually useful in real deployment. As in the binary classification task of Section~\ref{sec:logistic-classify-experiment}, we define ground-truth usefulness by evaluating the direct population gap $\Delta$ between $f$ and $f^{\mathrm{syn}}$ over $200$ held-out real-environment interaction episodes. Among the $200$ test instances, $99$ satisfy $H_1$ and $101$ satisfy $H_0$.

\noindent\textbf{Results.} Fig.~\ref{fig:network-packet-mstop-h1} plots the empirical CDF of the stopping time $m_{\mathrm{stop}}$ under the alternative hypothesis $H_1$ (synthetic data are useful) for each method in the packet-scheduling task. The figure shows that both adaptive schemes can terminate before exhausting all the episodes on easier test instances. aeSFT allows much smaller stopping times than aMT, while eSFT and the $t$-test make decisions only at their prespecified fixed sample size.

\begin{figure}[t]
\centering
\includegraphics[width=0.45\textwidth]{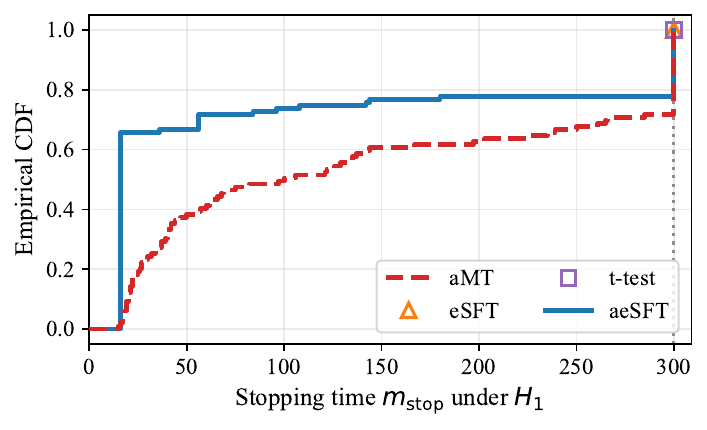}
\caption{\small Empirical CDF of stopping time $m_{\mathrm{stop}}$ under the alternative hypothesis $H_1$ (synthetic data are useful) for aMT, eSFT, $t$-test, and aeSFT in the wireless packet scheduling task.}
\label{fig:network-packet-mstop-h1}
\end{figure}

As in Fig.~\ref{fig:logistic-classify-repetitions}, Fig.~\ref{fig:network-packet-results} plots TPR and FPR as a function of the test episodes consumed by aMT, eSFT, $t$-test, and aeSFT. The figure shows that aeSFT detects useful synthetic trajectories substantially earlier than aMT. After consuming the initial batch of $m_{\mathrm{init}}=16$ real test episodes, aeSFT attains a TPR of $0.657$, whereas aMT remains near zero. The TPR of aeSFT reaches $0.717$ at $31.8$ episodes and $0.778$ at the final average consumption of $88.0$ episodes. In contrast, aMT remains at zero through $15$ episodes and reaches TPRs of $0.505$ and $0.717$ only after average consumptions of $68.9$ and $143.6$ episodes, respectively. For reference, eSFT and the $t$-test achieve TPR levels ranging from approximately $0.7$ to $0.85$, but at the cost of having to prespecify a fixed sample size for each test.

\begin{figure}[t]
\centering
\includegraphics[width=0.45\textwidth]{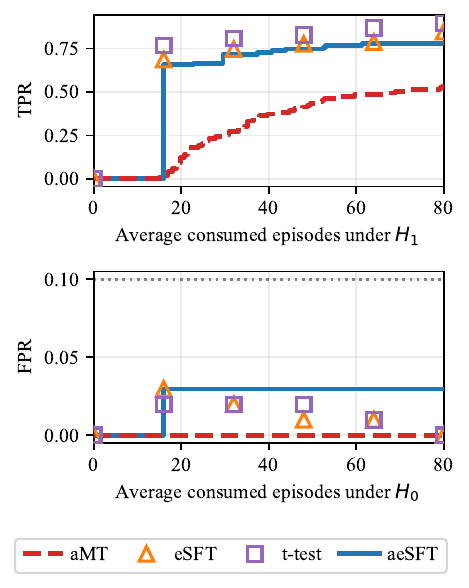}
\caption{\small TPR and FPR versus average consumption of real-network test episodes for the packet-scheduling task, with target level $\alpha=0.1$ for aMT, eSFT, $t$-test, and aeSFT.}
\label{fig:network-packet-results}
\end{figure}

The lower panel shows that all methods remain well below the target FPR of $\alpha=0.1$. Thus, in this network-control task, aeSFT provides a favorable compromise: it maintains a low FPR, identifies useful synthetic environments with markedly fewer real-network evaluation episodes, and retains the flexibility to continue testing with fresh data.

\subsection{Radio-Map Prediction}
\label{sec:radiomap-experiment}

\noindent\textbf{Setting.} Radio maps describe the signal strength of a base station across
a geographical area and are important inputs to wireless-network planning.
Physics-based ray tracing, such as NVIDIA Sionna RT~\cite{Hoydis2023SionnaRT},
can generate accurate radio maps from a three-dimensional (3D) scene, but its cost
grows rapidly when the maximum reflection depth and the number of rays per transmitter (Tx) are increased. For example, a high-accuracy setup may
use a reflection depth, i.e., the maximum number of times a radio wave ray can bounce off objects, of 20 and $1\times10^9$ rays/Tx~\cite{11575744}. This motivates neural-network-based predictors
such as PMNet~\cite{Lee2023PMNet} and AIRMap~\cite{11575744}, which learn to predict a complete radio map
directly from geometric maps, without the need to run ray tracing at inference time.

\begin{figure}[t]
\centering
\includegraphics[width=0.5\textwidth]{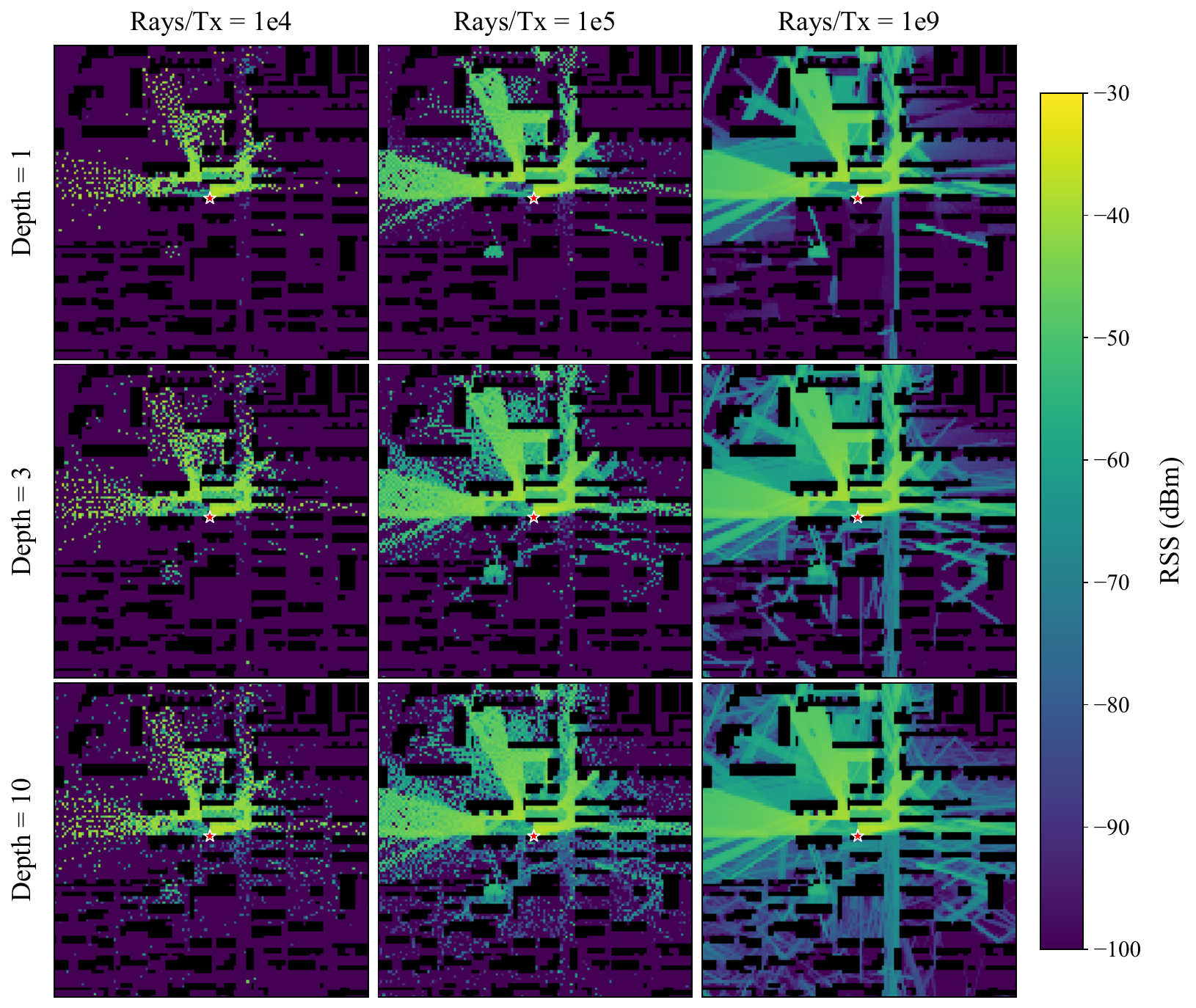}
\caption{\small Sionna RT~\cite{Hoydis2023SionnaRT} radio maps for a fixed $512\,\mathrm{m}\times512\,\mathrm{m}$ scene imported from OpenStreetMap~\cite{haklay2008openstreetmap} and a base-station location (star marker) under different reflection depths, i.e., maximum numbers of reflections, and rays/Tx settings.}
\label{fig:radiomap-fidelity}
\end{figure}

To elaborate, adopting a setting similar to \cite{11575744} and \cite{Lee2023PMNet}, we consider a $512\,\mathrm{m}\times512\,\mathrm{m}$ area discretized into a
$128\times128$ grid. The input of the radio map prediction consists of the building-height map together with the base-station's location and height, and the output of the prediction is the
received signal strength (RSS) at every grid cell. The real-world geometric scenes are imported from OpenStreetMap~\cite{haklay2008openstreetmap}. Fig.~\ref{fig:radiomap-fidelity} illustrates the effect of ray-tracing fidelity for a given scene, with black areas representing buildings and a given base-station location marked with a star. Small ray budgets produce sparse maps with large uncovered
regions, whereas increasing the number of rays and the reflection depth resolves more
non-line-of-sight paths and yields a denser radio map. These improvements, however, come at the cost of increased computational complexity.

We treat maps generated at depth 20 with $10^9$ rays/Tx as the
high-fidelity real data $\mathcal{D}$, while low-fidelity synthetic maps $\mathcal{D}^{\mathrm{syn}}$ are generated on
a $14\times14$ grid comprising depths ranging from $1$ to $14$ and rays/Tx ranging from $10^4$ to $10^5$ logarithmically spaced. This yields $196$ low-fidelity synthetic datasets $\mathcal{D}^{\mathrm{syn}}$ with different ray-tracing fidelities. The 3D scene, Tx layout, and ray seed are shared across all 196 settings, isolating the effect of synthetic-data fidelity. 

We train one baseline model $f$ based on dataset $\mathcal{D}$ and one synthetic-data-augmented model $f^{\mathrm{syn}}$ for every ray-tracing fidelity. The real dataset comprises $64$ high-fidelity samples, and the synthetic datasets comprise $192$ low-fidelity maps each. The loss is the masked per-map mean squared error (MSE) after normalizing RSS to $[0,1]$~\cite{Lee2023PMNet}. All models are trained using the same architecture and hyperparameters, with the complete configuration summarized in Appendix~\ref{app:experiment-config}. We use a maximum budget of $400$ high-fidelity test maps, an initial batch size of $m_{\mathrm{init}}=50$, and a batch-growth factor $\beta=1.6$.

\noindent\textbf{Evaluation.} For each synthetic model, the paired improvement on a held-out high-fidelity map $Z$ is given by $\delta=\mathbb{L}(f;Z)-\mathbb{L}(f^{\mathrm{syn}};Z)$. We determine its ground-truth usefulness using the average improvement over 1,000 independent high-fidelity maps. This yields 123 useful settings under $H_1$ and 73 non-useful settings under $H_0$. All statistical tests are then run on a separate set of 400 high-fidelity maps. For aMT, we set $L_{\max}$ to the maximum loss over all test samples across all models.

% \begin{figure}[t]
% \centering
% \includegraphics[width=\textwidth]{ExpResult/radiomap_usefulness_grid.pdf}
% \caption{\small Ground truth usefulness and test decisions over the 196 low-fidelity synthetic datasets, compared at approximately 150 average consumed high-fidelity maps. A black cell denotes a useful dataset in the oracle panel and rejection of the corresponding null in each method panel.}
% \label{fig:radiomap-usefulness-grid}
% \end{figure}

\begin{figure}[t]
\centering
\includegraphics[width=0.45\textwidth]{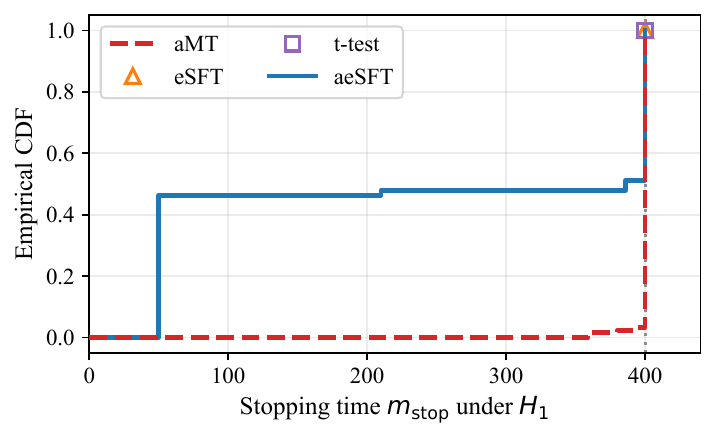}
\caption{\small Empirical CDF of stopping time $m_{\mathrm{stop}}$ under the alternative hypothesis $H_1$ (synthetic data are useful) for aMT, eSFT, $t$-test, and aeSFT in the radio-map prediction task.}
\label{fig:radiomap-mstop-h1}
\end{figure}

\noindent\textbf{Results.} Fig.~\ref{fig:radiomap-mstop-h1} reports the empirical CDF of the stopping time
for useful synthetic datasets, i.e., under the alternative hypothesis $H_1$. In this experiment, aMT makes no rejection throughout the first $350$ maps and identifies only $4$ of the $123$ useful synthetic datasets by the full $400$-map budget. In contrast, aeSFT identifies $57$ of the $123$ useful synthetic datasets after consuming the initial batch of $50$ maps. eSFT and
the $t$-test make their decisions only at their separately prespecified fixed
sample sizes.

\begin{figure}[t]
\centering
\includegraphics[width=0.45\textwidth]{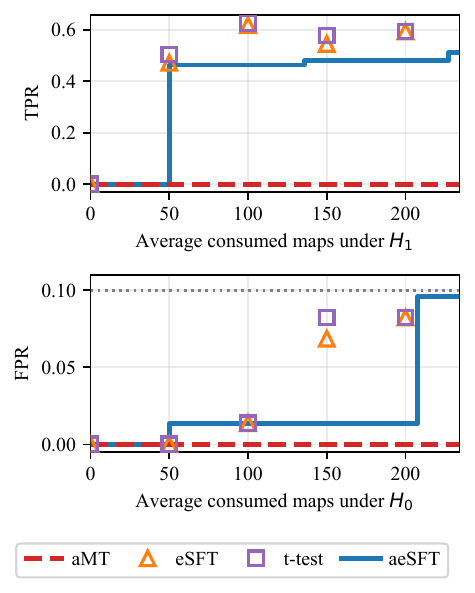}
\caption{\small TPR and FPR over the 196 settings versus the average number of consumed high-fidelity radio maps, with target level $\alpha=0.1$ for aMT, eSFT, $t$-test, and aeSFT.}
\label{fig:radiomap-test-results}
\end{figure}

As in the preceding experiments, Fig.~\ref{fig:radiomap-test-results} compares TPR and FPR as functions of the average number of consumed high-fidelity test maps. After the initial batch of $m_{\mathrm{init}}=50$ maps, aeSFT already attains a TPR of $0.463$, whereas aMT remains at zero. With adaptive continuation, aeSFT increases its TPR to $0.480$ at an average consumption of $135.9$ maps and to $0.512$ at $227.4$ maps under the alternative hypothesis $H_1$. Its empirical FPR is $0.014$ after the initial batch and rises to $0.096$ at an average consumption of $207.8$ maps under $H_0$, remaining below $\alpha=0.1$. In contrast, aMT makes no rejection over the displayed consumption range.

The gain of aeSFT in early detection observed in Fig.~\ref{fig:radiomap-test-results} is obtained while maintaining controlled empirical false-alarm behavior. Specifically, the FPR of aeSFT is 0.014 after the first batch, remains at 0.014 through an average consumption of approximately 200 maps under $H_0$, and increases to 0.096 with further continuation, remaining below the target level
$\alpha=0.1$. For reference, eSFT and the $t$-test reach higher TPRs at some fixed sample sizes, but each point requires a separately prespecified test size and does not permit continuation of the same test.

\begin{figure}[t]
\centering
\includegraphics[width=0.45\textwidth]{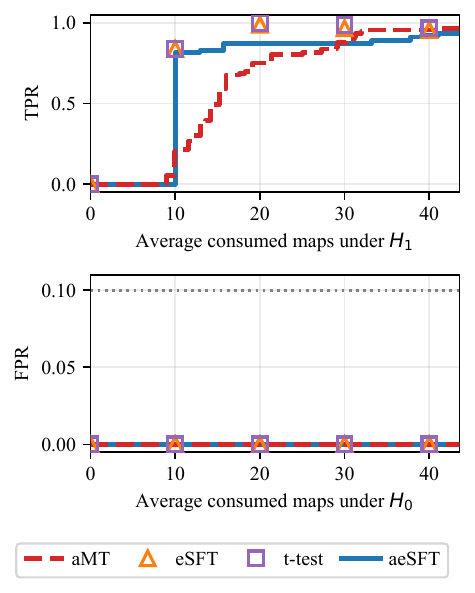}
\caption{\small TPR and FPR for the IoU metric over the 196 radio-map settings versus the average number of consumed high-fidelity radio maps, with target level $\alpha=0.1$ for aMT, eSFT, $t$-test, and aeSFT.}
\label{fig:radiomap-iou-results}
\end{figure}

While MSE is widely used to evaluate radio-map prediction accuracy, downstream wireless communication applications such as user scheduling~\cite{7560605} and base-station coverage optimization~\cite{tao2026intelligentbasestationdeployment} are often more concerned with the spatial extent over which sufficiently strong signal can be provided. To capture this information, we define the promising region of a radio map as the set of pixels whose RSS is at least $-80\,\mathrm{dBm}$, and evaluate its prediction using the intersection over union (IoU) between the predicted and ground-truth promising regions. Accordingly, the per-map loss is defined as $\mathbb{L}_{\mathrm{IoU}}=1-\mathrm{IoU}\in[0,1]$, since a larger IoU indicates better prediction. This loss has the deterministic bound $L_{\max}=1$, directly satisfying the bounded-loss requirement of aMT. We recompute the ground-truth usefulness of each synthetic dataset using the mean IoU improvement over the same $1{,}000$ held-out maps, yielding $93$ settings under $H_1$ and $103$ under $H_0$.

Fig.~\ref{fig:radiomap-iou-results} shows that aeSFT retains the benefit of adaptive data consumption while achieving detection performance comparable to the fixed-sample baselines. Specifically, aeSFT attains a TPR of $0.925$ after consuming $40$ maps on average under $H_1$, while eSFT and the $t$-test attain TPRs of $0.957$ and $0.968$, respectively, at the same prespecified sample size of $40$. We also observe that aMT attains a TPR of $0.957$ after consuming $40$ maps on average, representing a substantial improvement over its performance in the MSE-based experiment. This observation is consistent with the earlier discussion of the direct test: the IoU loss is naturally bounded and exhibits less pronounced heavy-tail behavior in loss differences, which provide more favorable conditions for mean-based sequential testing. All four methods maintain zero empirical FPR.

\section{Conclusion}
\label{sec:conclusion}

DTs and learned world models can alleviate the scarcity of real training data, but the sim-to-real gap means that synthetic augmentation does not reliably improve real-world performance. This work addressed how to determine, given a real training dataset, a candidate synthetic dataset, and a fixed learning algorithm, whether augmentation improves population-level performance while using as few real test data points as possible. Specifically, we introduced aeSFT to adapt both the number of Monte Carlo sign-flip rounds and the amount of real test data consumed. Its intra-batch e-process measures evidence within the current batch, while its inter-batch e-process compounds evidence across independent batches. Under the symmetry null, stopping when the accumulated evidence first exceeds $1/\alpha$ controls the Type-I error at level $\alpha$ without requiring a prespecified test-set size or a known finite loss bound.

Across binary classification, wireless packet scheduling, and radio-map prediction, aeSFT consistently detected useful synthetic data earlier than aMT. In the binary classification task, aeSFT attained a TPR above $0.5$ after consuming $500$ test samples, whereas aMT required over $1{,}000$ samples to reach the same TPR on average. In packet scheduling, aeSFT attained TPR $0.657$ after its initial $16$ real test episodes, compared with near zero for aMT, and its mean stopping time under $H_1$ was $88.0$ episodes versus $143.6$ for aMT. In radio-map prediction, aeSFT attained a TPR of $0.463$ after $50$ high-fidelity maps while aMT remained at zero; adaptive continuation raised its TPR to $0.512$ while keeping the empirical FPR below $\alpha=0.1$. Fixed-sample eSFT and the paired $t$-test achieved comparable power at some prespecified sample sizes, but they do not allow the same test to continue with additional data. Overall, aeSFT provides early detection and anytime-valid continuation while maintaining the FPR below the target level.

In general, aeSFT is applicable to any setting in which the paired loss difference can be evaluated on real test data, especially when trusted real evaluations are expensive. In wireless communications, it can evaluate synthetic channel states, radio maps, or network-control trajectories generated by network DTs using limited field measurements or high-fidelity simulations. For large language models, it can assess synthetic instruction, reasoning, or preference datasets using a limited set of expert-annotated prompts or human-preference evaluations. In robotics and autonomous driving, it can validate world-model-generated perception data and control trajectories using real-world rollouts. By allocating additional real evaluations only to candidate synthetic datasets that remain unresolved, aeSFT provides a statistically controlled and efficient validation layer between synthetic-data generation and its downstream use.

\begin{appendices}

\section{Proof of Theorem~\ref{prop:eprocess}}\label{app:proof}

We prove that, under the null $H_0^{\mathrm{syn}}$ in \eqref{eq:H0-prime} and the conditions stated in Theorem~\ref{prop:eprocess}, the inter-batch wealth growth process $\widetilde{W}_{k}^{b}$ defined in \eqref{eq:inter-batch-e-process} is an e-process, so that the rejection rule $\widetilde{W}_{k}^{b}\ge 1/\alpha$ controls the Type-I error at level $\alpha$.

\paragraph{Intra-batch e-process}
We first establish that the intra-batch wealth growth process in each batch is an e-process. Fix a batch $k$ and let $\mathcal{F}_k^{b}$ denote the intra-batch test history, which is defined as the $\sigma$-algebra generated by the sign-flip indicators observed up to step $b$, i.e.,
\begin{equation}
     \mathcal{F}_k^{b}=\sigma(I_k^1,\dots,I_k^b).
\end{equation}
By the budget constraint \eqref{eq:budget-constraint}, under $H_0^{\mathrm{syn}}$ the betting function satisfies $\mathbb{E}_{H_0^{\mathrm{syn}}}[G_k^b(I_k^b)\mid\mathcal{F}_k^{b-1}]= 1$, so the intra-batch process $(W_k^b)_{b\ge 0}$, started at $W_k^0=1$, is a nonnegative test martingale~\cite{fischer2025smc}, i.e.,
\begin{equation}
\mathbb{E}_{H_0^{\mathrm{syn}}}\bigl[W_k^b\mid\mathcal{F}_k^{b-1}\bigr]= W_k^{b-1}.
\label{eq:intra-martingale}
\end{equation}
This is the fair-game property within casino $k$: under the null, a single bet cannot increase the gambler's wealth in expectation. The number of rounds $B_k$ is a stopping time with respect to $(\mathcal{F}_k^b)_{b\ge0}$, that is, the decision of whether to stop at step $b$ uses only the batch data and the indicators observed up to step $b$. By the optional stopping theorem for nonnegative test martingales~\cite{ramdas2025evalues}, the stopped wealth therefore satisfies
\begin{equation}
\mathbb{E}_{H_0^{\mathrm{syn}}}\bigl[W_k^{B_k}\bigr]\le 1,
\label{eq:intra-leq-1}
\end{equation}
which identifies each stopped intra-batch wealth as a valid e-value under $H_0^{\mathrm{syn}}$.

\paragraph{Inter-batch e-process}
We then show that the inter-batch product is an e-process. The stopping time of the entire procedure depends on the accumulated wealth, which in turn depends on the history of all sign-flip indicators and test batches observed across batches $1,\dots,k$, so we consider the joint history of all batches together.

We index the steps of the whole procedure by the pairs $(k,b)$ ordered lexicographically, where step $(k,b)$ is the $b$-th sign-flip round of batch $k$, and let $\mathcal{H}_k^{b}$ denote the inter-batch testing history, defined as the $\sigma$-algebra generated by sign-flip indicators observed up to step $(k,b)$, i.e.,
\begin{equation}
\mathcal{H}_k^b=\sigma\bigg(
\{I_k^1,\dots,I_k^b\}
\cup\bigcup_{j=1}^{k-1}\{I_j^1,\dots,I_j^{B_j}\}\bigg).
\end{equation}

Within batch $k$, the inherited wealth $\prod_{j=1}^{k-1}W_j^{B_j}$ is $\mathcal{H}_k^{b-1}$-measurable. The batch size $m_k$ is selected from the past before batch $k$ is drawn, and the fresh observations in that batch are i.i.d.\ and independent of all previous batches. Therefore, conditioning on the cross-batch history does not change the conditional law of the current batch, and \eqref{eq:intra-martingale} continues to hold after conditioning on $\mathcal{H}_k^{b-1}$. The inter-batch process is consequently a nonnegative test martingale by \eqref{eq:inter-batch-e-process}:
\begin{align}
\mathbb{E}_{H_0^{\mathrm{syn}}}\bigl[\widetilde{W}_k^{b}\mid\mathcal{H}_k^{b-1}\bigr]
&=\Bigl(\textstyle\prod_{j=1}^{k-1}W_j^{B_j}\Bigr)\,
\mathbb{E}_{H_0^{\mathrm{syn}}}\bigl[W_k^{b}\mid\mathcal{H}_k^{b-1}\bigr]\notag\\
&=\Bigl(\textstyle\prod_{j=1}^{k-1}W_j^{B_j}\Bigr)\,
\mathbb{E}_{H_0^{\mathrm{syn}}}\bigl[W_k^{b}\mid\mathcal{F}_k^{b-1}\bigr]\notag\\
&=\Bigl(\textstyle\prod_{j=1}^{k-1}W_j^{B_j}\Bigr) W_k^{b-1}=\widetilde{W}_k^{b-1}.
\label{eq:within-batch}
\end{align}

By the optional stopping theorem for nonnegative test martingales~\cite{ramdas2025evalues}, for any stopping time $(K,B_K)$ of the whole procedure with respect to $\mathcal{H}_k^b$, the stopped wealth satisfies
\begin{equation}
\mathbb{E}_{H_0^{\mathrm{syn}}}\bigl[\widetilde{W}_K^{B_K}\bigr]\le 1,
\label{eq:inter-leq-1}
\end{equation}
which establishes that $(\widetilde{W}_k^b)$ is an e-process.

By Ville's inequality, this in turn gives
\begin{equation}
\Pr\nolimits_{H_0^{\mathrm{syn}}}\Bigl(\exists(k,b):\;\widetilde{W}_{k}^{b}\ge 1/\alpha\Bigr)
\le \alpha,
\end{equation}
which establishes Type-I control of aeSFT at level $\alpha$.
\hfill$\square$

\section{Experimental Configurations}\label{app:experiment-config}

The configurations of the wireless packet-scheduling and radio-map prediction experiments are summarized in Tables~\ref{tab:wireless-config} and~\ref{tab:radiomap-config}, respectively.

\begin{table}[t]
\centering
\caption{Configuration of the wireless packet-scheduling experiment.}
\label{tab:wireless-config}
\small
\begin{tabular}{>{\raggedright\arraybackslash}p{0.18\linewidth}p{0.73\linewidth}}
\toprule
Component & Setting \\
\midrule
Environment & Arrival rates $\bm\mu=(4.0,1.8,0.8)$; deadlines $\bm\tau=(6,3,1)$; bandwidths $\bm b=(1,2,4)$; rewards $\bm R=(1,4,10)$; shared capacity $C=10$. \\
State and action & Three deadline-bucket buffers with $D_{\max}=8$ and per-bucket cap $N_{\max}=30$, giving a $24$-dimensional normalized state; $66$ integer bandwidth allocations satisfying $\sum_k u_k=C$. \\
Episode and dataset & $T=60$ slots; $|\mathcal{D}|=10$ real trajectories ($600$ transitions); $|\mathcal{D}^{\mathrm{syn}}|=30$ synthetic trajectories ($1{,}800$ transitions). \\
Behavior data policy & Greedy immediate-reward action with probability $0.55$ and a uniformly selected feasible action with probability $0.45$. \\
DQN architecture & State and three normalized action counts as input; two fully connected ReLU hidden layers of width $64$; scalar Q-value output. \\
DQN training & Adam with learning rate $10^{-3}$, Smooth-$L_1$ loss, discount $0.97$, minibatch size $128$, target update every $25$ gradient steps, and gradient-norm clipping at $5$. The real-only model uses $120$ updates and the pooled-data model uses $480$ updates, preserving the number of passes over the data. \\
\bottomrule
\end{tabular}
\end{table}

\begin{table}[t]
\centering
\caption{Configuration of the radio-map prediction experiment.}
\label{tab:radiomap-config}
\footnotesize
\begin{tabular}{>{\raggedright\arraybackslash}p{0.16\linewidth}p{0.76\linewidth}}
\toprule
Component & Setting \\
\midrule
Spatial grid & A $512\,\mathrm{m}\times512\,\mathrm{m}$ window with $4\,\mathrm{m}$ cells, giving a two-channel $128\times128$ input and a one-channel $128\times128$ RSS prediction. \\
Input encoding & Channel 1 is the 3D geometry represented as a building-height grid and normalized to $[0,1]$. Channel 2 is zero everywhere except at the grid cell containing the base station, whose positive value is the base-station height normalized to $[0,1]$. \\
Transmitter setting & Carrier frequency $3.5\,\mathrm{GHz}$; transmit power $40\,\mathrm{dBm}$; $1\times1$ planar transmit array with \texttt{pattern=iso} and vertical polarization. \\
Wireless propagation & Radio-map orientation $(0,0,0)$. Line-of-sight, specular reflection, and diffraction are enabled. All buildings use \texttt{itu\_concrete}, and the ground uses \texttt{itu\_very\_dry\_ground}; material parameters are fixed across all datasets. \\
Predictor & A compact version of the PMNet encoder--decoder architecture~\cite{Lee2023PMNet}. The encoder uses a $16$-channel stem and four bottleneck residual stages with $(1,1,3,1)$ blocks and output widths $(64,128,128,256)$. Atrous spatial pyramid pooling (ASPP) is used to combine five parallel $64$-channel branches and projects into $128$ channels. Six skip-connected decoder blocks with widths $(128,128,64,64,64,32)$ reconstruct the $128\times128$ map; the final head has widths $(34,16,16,1)$ and a sigmoid output. \\
Training & AdamW with learning rate $3\times10^{-4}$, weight decay $10^{-5}$, batch size $16$, and $30$ epochs; a ReduceLROnPlateau scheduler is adopted with factor $0.25$ and patience $3$. All models share the same initialization and training seed. \\
\bottomrule
\end{tabular}
\end{table}

\end{appendices}

\bibliographystyle{IEEEtran}
\bibliography{references}

\end{document}